\newcommand{\pjn}{\textsc{Astra}\xspace}
\newtheorem*{thmrep_noise_dis}{Theorem \ref{thm:noise_dis}}
\newtheorem*{thmrep_discls}{Theorem \ref{thm:discls}}
\theoremstyle{plain}
\newtheorem{theorem}{Theorem}[section]
\theoremstyle{definition}
\theoremstyle{remark}
\icmltitlerunning{\pjn: Communication-Efficient Acceleration for Multi-Device Transformer Inference}
\begin{document}

\twocolumn[
  \icmltitle{\pjn: Communication-Efficient Acceleration\\for Multi-Device Transformer Inference}



  \icmlsetsymbol{equal}{*}

  \begin{icmlauthorlist}
    \icmlauthor{Xiao Liu}{umass}
    \icmlauthor{Lijun Zhang}{amazon}
    \icmlauthor{Deepak Ganesan}{umass}
    \icmlauthor{Hui Guan}{umass,aws}
  \end{icmlauthorlist}

  \icmlaffiliation{umass}{Manning College of Information and Computer Sciences, University of Massachusetts Amherst, MA, US}
  \icmlaffiliation{amazon}{Amazon, Seattle, WA, US}
  \icmlaffiliation{aws}{Amazon AWS, Washington, D.C., US}

  \icmlcorrespondingauthor{Xiao Liu}{xiaoliu1990@cs.umass.edu}

  \icmlkeywords{Machine Learning, ICML}

  \vskip 0.3in
]



\printAffiliationsAndNotice{}  

\begin{abstract}

    Multi-device inference can reduce Transformer latency by parallelizing computation. However, existing methods require high inter-device bandwidth, making them impractical for bandwidth-constrained environments. We present \pjn, a communication-efficient framework that integrates sequence parallelism with mixed-precision attention, where non-local token embeddings are transmitted as low-bit vector-quantized codes while local attention remains full precision. To preserve accuracy under aggressive compression, \pjn introduces Noise-Augmented Quantization and Distributed Class Tokens. Across vision and language models (e.g., ViT and GPT2), \pjn achieves up to 2.64$\times$ speedup over single-device inference and up to 15.25$\times$ over prior multi-device baselines while operating at bandwidths as low as 10 Mbps. \pjn remains robust on large models (e.g., Llama-3-8B) even under non-ideal network conditions such as packet loss and dynamic networks.
\end{abstract}

\section{Introduction} \label{sec:introduction}

Transformer models~\citep{dosovitskiy2020vit,devlin2019bert,radford2019gpt} have become central to modern AI applications in both vision and language domains. As models grow in scale to improve accuracy, their inference time becomes prohibitive, particularly for real-time applications or latency-sensitive user experiences. To address this bottleneck, many works have focused on optimizing single-device inference through techniques like quantization~\citep{liu2021quantize}, pruning~\citep{kwon2022prune}, and knowledge distillation~\citep{lin2022distill}. 
Since the latency improvement on a single device remains fundamentally limited by the device's capacity, \textit{Multi-device inference} draws increasing attention~\citep{ hu2024voltage, du2024detransformer}. It complements the above optimization techniques by parallelizing model execution across multiple devices. 
In theory, it can reduce latency nearly linearly with the number of devices. This setup is especially attractive in practical settings where resource-constrained edge devices or distributed consumer-grade hardware can collaborate to process sporadic inference requests. 

\begin{figure}[t]
    \centering
    \includegraphics[width=0.9\linewidth]{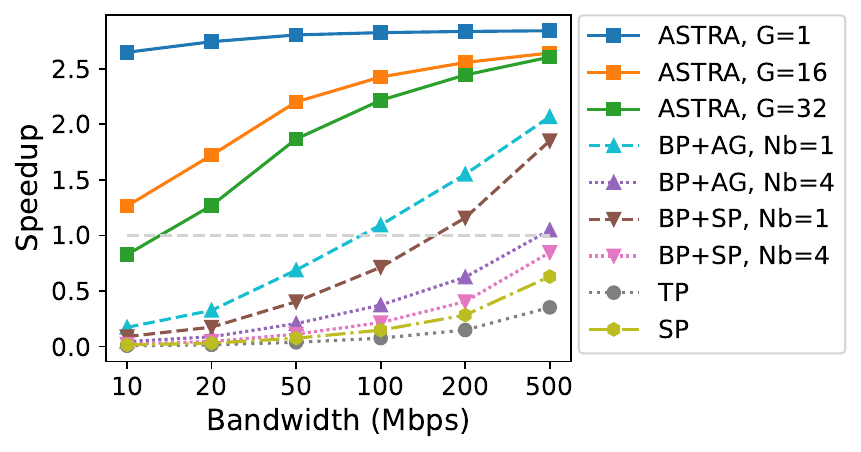}
    \caption{Latency speedup for multi-device inference methods and our proposed method \pjn with different groups $G$ under different bandwidths with 4 devices and 1024 input tokens. BP: Block Parallelism (smaller $N_b$ means less communication), SP: Sequence Parallelism, TP: Tensor Parallelism, AG: AllGather. }
    \label{fig:bandwidth-speedup}
\end{figure}

Our analysis, however, reveals a critical bottleneck that has been overlooked: \textbf{in bandwidth-constrained settings ($\leq$ 100 Mbps), inter-device communication dominates the existing multi-device inference time, accounting for 58.6-93.5\% of total latency} (see \S\ref{sec:latency}).
This communication overhead negates the theoretical speedups promised by existing multi-device approaches. 
Existing methods, such as Tensor Parallelism (TP) in Megatron-LM~\citep{shoeybi2019megatron}, Sequence Parallelism (SP) in Voltage~\citep{hu2024voltage}, and Block Parallelism (BP) in DeTransformer~\citep{du2024detransformer}, rely on high-volume communication bandwidth to achieve modest speedups, as shown in Figure~\ref{fig:bandwidth-speedup}. 
This requirement far exceeds what is commonly available in bandwidth-constrained environments. For example, indoor Wi-Fi networks (e.g., Wi-Fi 4/5/6) typically deliver practical throughput in the range of 50–300 Mbps, depending on interference, distance from the router, and device capabilities. As a result, current approaches are ill-suited for wireless or edge deployments, where lower and more variable bandwidth is the norm~\citep{wiisfi}.

In this work, we propose \pjn, a communication-efficient framework for accelerating Transformer inference across multiple devices. \pjn rethinks how attention computation is distributed by building on sequence parallelism, which assigns different input tokens to different devices. The key innovation in \pjn is a \textbf{Mixed-Precision Attention} mechanism that dramatically reduces communication cost: local attention is computed using full-precision embeddings, while remote tokens are encoded using low-bit vector quantization before transmission. These compressed embeddings are decoded on the receiving end and used for approximate attention computation.

To preserve accuracy, \pjn incorporates two key designs. 
First, a \textbf{Noise-Augmented Vector Quantization} strategy injects Gaussian noise to vector-quantized token embeddings during training, enhancing the diversity of the quantized feature space and improving generalization to unseen inputs.
Second, a \textbf{Distributed Class Token} scheme assigns each device its own local class token, which attends to uncompressed local tokens with full precision and vector-quantized tokens from other devices. 
The outputs from distributed class tokens are then aggregated as a comprehensive embedding before the final prediction.
Through these techniques, \pjn makes multi-device inference viable in low-bandwidth scenarios while maintaining high accuracy.

Our main contributions are: 
\begin{itemize}[noitemsep,nolistsep,topsep=0pt]
    \item \pjn --- a new multi-device Transformer inference framework that significantly reduces communication overhead via a novel Mixed-Precision Attention mechanism. We design two techniques, Noise-Augmented Quantization and Distributed Class Tokens, to maintain high accuracy under aggressive communication compression.

    \item \emph{Comprehensive Evaluation} --- We evaluate \pjn on Transformer models, including ViT~\citep{dosovitskiy2020vit} and GPT2~\citep{radford2019gpt}, on both vision tasks (e.g., CIFAR-100~\citep{krizhevsky2009lcifar} and ImageNet-1K~\citep{deng2009imagenet}) and NLP tasks (e.g., Wikipedia and Wikitext-103~\citep{merity2016wikitext}). \pjn reduces the bandwidth requirement for multi-device speedup from 500 Mbps to as low as 10 Mbps, achieving up to $2.64\times$ end-to-end latency speedup even under constrained bandwidth where prior methods fail (Figure~\ref{fig:bandwidth-speedup}). It maintains task accuracy within $3.58\%$ of the original model under extreme communication compression. 
    Moreover, \pjn remains effective in heterogeneous device environments, achieving accuracy within $1.43\%$ drop under imbalanced token distributions across devices. 
    It also retains compatibility with single-device quantization: when applied jointly, \pjn achieves an additional $1.35-2.73\times$ latency speedup over single-device quantization alone, while preserving accuracy within $3.37\%$ drop. 

    \item \emph{Design Analysis} --- Through detailed ablation studies, we demonstrate the effectiveness of each key design component. 
    The Mixed-Precision Attention Computation Mechanism achieves a latency speedup of up to $171.82\times$ compared to the original attention under constrained bandwidth, the Noise-Augmented Vector Quantization improves accuracy by up to $0.86\%$ compared to naive vector quantization, and the Distributed Class Tokens design further enhances accuracy by up to $7.13\%$ compared to using a single class token.

\end{itemize}

\section{Background and Related Work} \label{sec:preliminary}
This section introduces existing techniques for multi-device inference acceleration and then the vector quantization (VQ) technique that is necessary to understanding \pjn. A broader survey of related work is provided in Appendix \ref{app:related-work}.

\textbf{Parallelization in Multi-Device Inference.}
To accelerate inference latency, many recent approaches explore distributing Transformer computations across multiple devices. Techniques originally developed for training, such as data parallelism, pipeline parallelism, and model parallelism, have been repurposed for inference.
\textit{Data parallelism}~\citep{li2014communication} and \textit{pipeline parallelism}~\citep{huang2019gpipe} improve throughput by processing different input samples or partitioning layers across devices. However, these methods are primarily suited to batch processing and offer limited benefits for latency-sensitive, single-request inference.
\textit{Model parallelism}, in contrast, splits the computation of individual model layers across devices, enabling true per-request acceleration. 
Existing approaches reduce some communication costs by tailoring the parallelization granularity~\citep{shoeybi2019megatron, hu2024voltage, du2024detransformer}, but they still require substantial bandwidth, such as 500 Mbps, to realize modest speedups. 
While this bandwidth may be achievable in high-performance server clusters, it exceeds the practical limits of many real-world environments, such as indoor Wi-Fi, where sustained throughput often falls below 300 Mbps depending on interference. 
Our work diverges from prior efforts by focusing explicitly on reducing the communication cost, which we identify as a primary bottleneck in multi-device Transformer inference. 

\textbf{Vector Quantization (VQ).}
\pjn uses vector quantization to compress token embeddings before inter-device transmission, significantly reducing communication overhead. VQ maps continuous vectors to discrete indices in a fixed codebook, allowing compact representation of information with a small number of bits~\citep{gersho2012vector}. 

\textit{Vanilla VQ} partitions the feature space into $K$ clusters, each represented by a centroid. 
Let $\mathbf{e} \in \mathbb{R}^{K \times D}$ be the codebook of centroids. For an input vector $\mathbf{z} \in \mathbb{R}^D$, VQ selects the closest centroid by $
i = \arg\min_{i} \|\mathbf{e}_i - \mathbf{z}\|_2^2$,
and transmits only the index $i$, requiring $\log_2 K$ bits per token. The codebook $\mathbf{e}$ can be learned via $K$-means clustering or updated online using exponential moving averages~\citep{vqvae}. This compact representation greatly reduces the communication cost between devices.

\textit{Grouped VQ}~\citep{GVQ} extends this idea by dividing the input vector into $G$ equal-length sub-vectors and quantizing each independently using separate codebooks. It increases expressiveness in the compressed representation and improves task accuracy, at the cost of higher communication overhead, $G \cdot \log_2 K$ bits per token. 
In experiments, \pjn{} with Grouped VQ outperforms Vanilla VQ counterpart in accuracy, offering a tunable trade-off between bandwidth usage and model performance. 
\section{The \pjn Framework} \label{sec:method}
We present \pjn, a communication-efficient multi-device inference framework for Transformer models. \pjn is designed to minimize inter-device communication while preserving accuracy, enabling fast inference even in bandwidth-constrained environments. The framework achieves this through three key innovations: (1) Mixed-Precision Attention, (2) Noise-Augmented Vector Quantization, and (3) Distributed Class Tokens. We begin with the inference workflow overview, then describe each core design in detail.

\subsection{Overview of \pjn}

Figure~\ref{fig:overview} illustrates the inference procedure of the \pjn framework. 
Given an input sequence consisting of a class token (optional) and $T$ content tokens, \pjn first partitions the content tokens evenly across $N$ devices, assigning $T/N$ tokens to each device. 
To support classification and similar global tasks, the class token \textit{CLS} is replicated to each device (\emph{Distributed Class Token} in \textit{Task Accuracy Preservation}).
Each device thus holds a disjoint subset of the input sequence, along with its own class token copy, and maintains a full copy of the Transformer model.

\begin{figure*}[h]
    \centering
    \includegraphics[width=.9\textwidth]{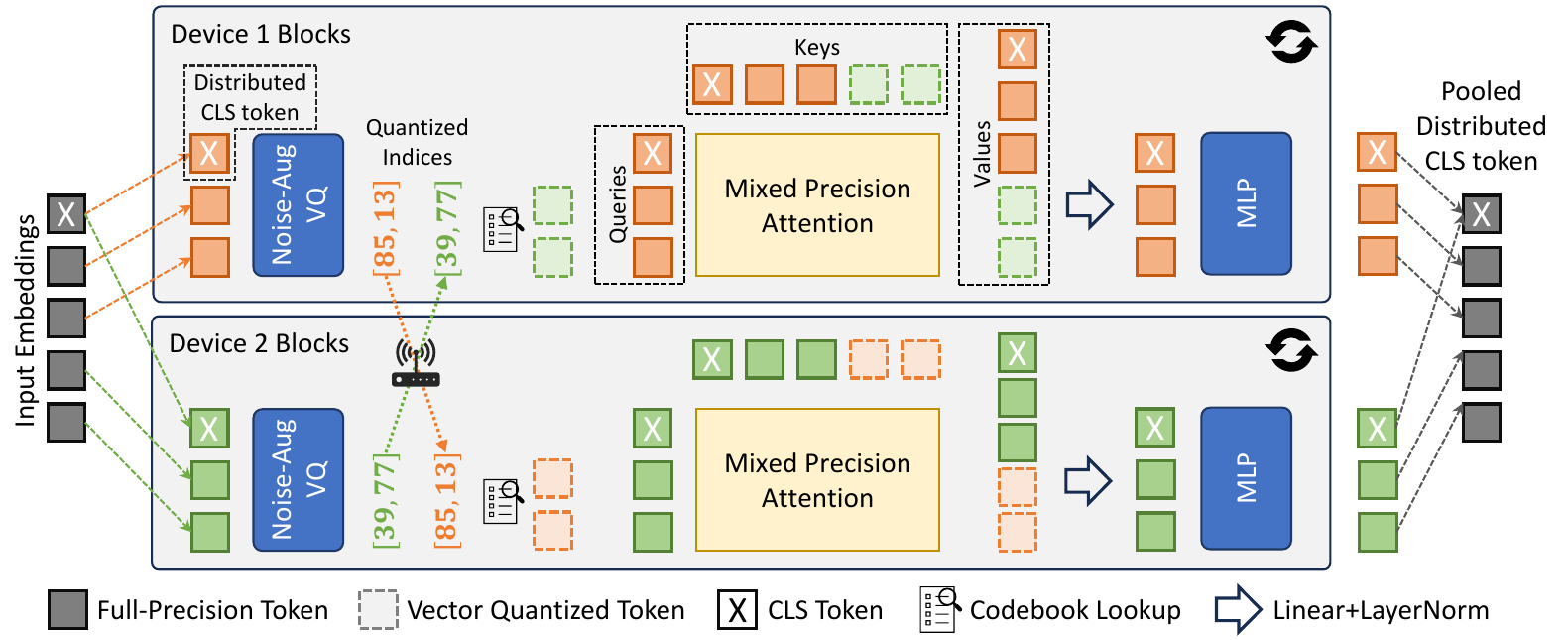}
    \caption{Overview of \pjn with two devices. We introduce three key innovations: (1) Mixed-Precision Attention, (2) Noise-Augmented Vector Quantization, and (3) Distributed Class Tokens to achieve communication-efficient multi-device inference. \pjn can be applied to transformers for both deterministic and generative tasks. }
    \label{fig:overview}
\end{figure*}

Within each Transformer block, the inference proceeds in parallel across devices.
Each device first applies our \emph{Noise-Augmented Vector Quantization} (see \textit{Task Accuracy Preservation}) to its local tokens, and transmits the corresponding low-bit indices to other devices.
Each device now has access to the full input sequence, full-precision for local tokens, and vector quantized versions for non-local tokens. 
It then performs \emph{Mixed-Precision Attention Computation} (see \textit{Extreme Communication Compression}), computing attention maps over this hybrid set of representations. 
Since the feed-forward network (i.e., MLP) is position-wise independent, it is executed locally on each device without communication.

After all Transformer blocks, all class tokens, replicated across devices, are aggregated via average pooling to form a single unified representation.
For classification tasks, this pooled class token is passed to a downstream prediction head to generate the final output.
For generative tasks such as next-token prediction with no class token, the input sequence is evenly partitioned across devices for parallel encoding, which reduces the prefill latency during inference.
Once the token encoding is complete, autoregressive decoding proceeds sequentially on a single device that holds the final (i.e., most recent) token in the input sequence.


\subsection{Extreme Communication Compression} \label{sec:communication}
Transformer attention requires global context aggregation, which poses a communication bottleneck when the input tokens are partitioned across devices. 
In detail, the self-attention layers must use all tokens in the sequence, including those stored on other devices, to compute attention for each local token. 
As a result, each Transformer block must perform an all-to-all exchange of embeddings across devices, leading to significant communication overhead. 
For example, if the input tokens are evenly partitioned across $N$ devices, with each holding $T/N$ tokens, then transmitting full-precision embeddings requires sending $T/N \times D \times r$ bits per device per block, where $D$ is the hidden dimension and $r$ is the precision (e.g., float32). Such overhead becomes prohibitive under realistic bandwidth constraints.

\textbf{Mixed-Precision Attention.}
To address this challenge, we propose to use both full-precision and compressed token representations during attention computation. 
Each device computes full-precision attention among its local tokens, and approximates attention interactions with non-local tokens, i.e., those stored on other devices, using vector-quantized embeddings. 
Only the low-bit indices of the vector-quantized embeddings are transmitted between devices.

Formally, for each local query $\mathbf{q}$, we compute attention over a mixed set of key and value pairs, full-precision representations from local tokens and vector-quantized ones from non-local tokens. 
Then the attention map is computed as:
\begin{equation}
\text{Attn}(\mathbf{Q}, \mathbf{K}, \hat{\mathbf{K}}, \mathbf{V}, \hat{\mathbf{V}}) = \sigma \left( \frac{\mathbf{Q}[\mathbf{K} \mid \hat{\mathbf{K}}]^\top}{\sqrt{d_k}} \odot \mathbf{M} \right) [\mathbf{V} \mid \hat{\mathbf{V}}],
\end{equation}
where $\hat{\mathbf{K}}$ and $\hat{\mathbf{V}}$ are derived from the vector-quantized embeddings $\hat{\mathbf{X}}$ via linear projections. The operator $\mid$ denotes row-wise concatenation, and $\sigma$ represents softmax operation.
During the training stage, the attention mask $\mathbf{M}$ ensures that full-precision interactions are only applied between local tokens, while interactions with non-local tokens use their compressed counterparts. 


\textbf{Vector-Quantized Non-Local Tokens.} 
The non-local compressed embeddings $\hat{\mathbf{X}}$ in Mixed-Precision Attention are produced by a vector quantization (VQ) module. 
Prior to transmission, each token embedding $\mathbf{X}$ is quantized by a codebook to an index $i$ via nearest-neighbor lookup. 
Since the codebook is shared across devices for each Transformer block, the receiving device can reconstruct $\hat{\mathbf{X}}$ using only the transmitted index $i$. 
This reduces the per-token communication cost from $rD$ bits to $\log_2 K$ bits, where $K$ is the codebook size, resulting in a compression ratio of $2457.6\times$ when $r=32$, $D=768$, and $K=1024$. 

The VQ module is jointly trained with the Transformer model.
Specifically, the codebook is initialized by running $K$-means clustering over intermediate token embeddings from the pretrained model, and is further updated via exponential moving average during model fine-tuning. 
Following VQVAE~\citep{vqvae}, we apply a commitment loss to encourage token embeddings to stay close to their assigned centroids. The overall training objective is:
\begin{equation}\label{eq:loss}
    \mathcal{L} = \mathcal{L}_t + \beta \|\mathbf{X} - \text{sg}(\hat{\mathbf{X}}) \|_2^2,
\end{equation}
where $\mathcal{L}_t$ is the task loss, $\text{sg}(\cdot)$ denotes the stop-gradient operation, and $\beta$ controls the strength of the commitment term. 
This design encourages the model to align token embeddings with their corresponding quantized representations, which improves downstream task performance even under aggressive compression.
Appendix \ref{app:more_ablation} further empirically demonstrates that the commitment term and a well-configured loss weight is necessary for maintaining accuracy.

\subsection{Task Accuracy Preservation} \label{sec:for-accuracy}
\textbf{Noise-Augmented Vector Quantization.}
Quantizing embeddings introduces discretization error, which can degrade model generalization.
To mitigate this, we propose a novel regularization strategy called \textit{Noise-Augmented Vector Quantization (NAVQ)}, which adds Gaussian noise to quantized embeddings during training.
This technique is inspired by the Vicinal Risk Minimization (VRM) principle~\citep{chapelle2000vicinal}, which improves generalization by exposing the model to perturbations around each data point in input space.
NAVQ extends this philosophy into the latent quantized embedding space. 
Instead of directly using the deterministic quantized embedding $\hat{\mathbf{X}}$ during training, we compute a noise-augmented version, 
$\tilde{\mathbf{X}} = \hat{\mathbf{X}} + \lambda \xi$,
where $\lambda \in (0,1]$ controls the noise magnitude and $\xi \sim \mathcal{N}(\boldsymbol{\mu}, \boldsymbol{\Sigma})$ is Gaussian noise sampled from the distribution of quantization residuals $\boldsymbol{\varepsilon} := \mathbf{X} - \hat{\mathbf{X}}$. This residual captures the error introduced by quantization, and the noise distribution is fit with empirical mean $\boldsymbol{\mu}$ and covariance $\boldsymbol{\Sigma}$ over training data.


By injecting noise into quantized embeddings during training, NAVQ restores a degree of continuity to the otherwise discrete latent space, encouraging the model to generalize across small perturbations and reducing sensitivity to codebook boundaries. At inference time, the noise is omitted and the model operates deterministically using $\hat{\mathbf{X}}$. We theoretically justify this approach in Appendix \ref{app:proof_noise} and prove:
\begin{theorem}[Noise-Augmented Embeddings Improve Distributional Fidelity] \label{thm:noise_dis}
Let $\hat{\mathbf{X}}$ denote the quantized embedding of $\mathbf{X}$, and let $\tilde{\mathbf{X}} = \hat{\mathbf{X}} + \lambda \xi$ with $\xi \sim \mathcal{N}(\boldsymbol{\mu}, \boldsymbol{\Sigma})$ sampled from the quantization residuals. Then the 2-Wasserstein distance between the original embedding distribution $P_{\mathbf{X}}$ and the perturbed distribution $P_{\tilde{\mathbf{X}}}$ satisfies:
    \begin{equation}
        W_2^2(P_\mathbf{X}, P_{\tilde{\mathbf{X}}}) < W_2^2(P_\mathbf{X}, P_{\hat{\mathbf{X}}}),
    \end{equation}
i.e., the noise-augmented distribution is statistically closer to the true distribution than the raw quantized embedding.
\end{theorem}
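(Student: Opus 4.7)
The plan is to reduce both sides of the inequality to expectations under explicit couplings, using the Monge--Kantorovich characterization $W_2^2(P,Q)=\inf_\gamma \mathbb{E}_\gamma\|U-V\|^2$, and then to compare the resulting bounds algebraically. First, for the right-hand side I would invoke the natural coupling in which $\hat{\mathbf{X}}$ is the nearest-centroid image of $\mathbf{X}$. Because the quantizer is a nearest-centroid map, any re-pairing of $\mathbf{X}$ with a different admissible codebook entry must strictly increase $\mathbb{E}\|\mathbf{X}-\hat{\mathbf{X}}\|^2$ while preserving the marginal of $\hat{\mathbf{X}}$, so this coupling is $W_2$-optimal. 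This yields $W_2^2(P_\mathbf{X},P_{\hat{\mathbf{X}}})=\mathbb{E}\|\boldsymbol{\varepsilon}\|^2=\operatorname{tr}(\boldsymbol{\Sigma})+\|\boldsymbol{\mu}\|^2$.

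For the left-hand side I would construct a coupling of $(\mathbf{X},\tilde{\mathbf{X}})$ by first sampling $(\mathbf{X},\hat{\mathbf{X}})$ from the natural joint and then drawing $\xi\sim\mathcal{N}(\boldsymbol{\mu},\boldsymbol{\Sigma})$ via the Wasserstein-optimal transport plan between $P_{\boldsymbol{\varepsilon}}$ and $P_\xi$, so that $\xi$ is maximally correlated with $\boldsymbol{\varepsilon}$. Since $\mathbf{X}-\tilde{\mathbf{X}}=\boldsymbol{\varepsilon}-\lambda\xi$, expansion gives
\begin{equation*}
\mathbb{E}\|\mathbf{X}-\tilde{\mathbf{X}}\|^2=\mathbb{E}\|\boldsymbol{\varepsilon}\|^2+\lambda^2\mathbb{E}\|\xi\|^2-2\lambda\,\mathbb{E}\langle\boldsymbol{\varepsilon},\xi\rangle.
\end{equation*}
Because $\boldsymbol{\varepsilon}$ and $\xi$ share the mean $\boldsymbol{\mu}$ and covariance $\boldsymbol{\Sigma}$, the optimal coupling yields $\mathbb{E}\langle\boldsymbol{\varepsilon},\xi\rangle=\operatorname{tr}(\boldsymbol{\Sigma})+\|\boldsymbol{\mu}\|^2-\tfrac12 W_2^2(P_{\boldsymbol{\varepsilon}},P_\xi)$, so the bound simplifies to $(1-\lambda)^2(\operatorname{tr}(\boldsymbol{\Sigma})+\|\boldsymbol{\mu}\|^2)+\lambda W_2^2(P_{\boldsymbol{\varepsilon}},P_\xi)$.

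Comparing to the right-hand side, the strict inequality reduces to $W_2^2(P_{\boldsymbol{\varepsilon}},P_\xi)<(2-\lambda)(\operatorname{tr}(\boldsymbol{\Sigma})+\|\boldsymbol{\mu}\|^2)$. The hard part is controlling this residual Wasserstein gap, because $\xi$ is Gaussian while $\boldsymbol{\varepsilon}$ is an empirical residual whose higher moments are not directly constrained by the construction. I expect to close this by either (i) invoking approximate Gaussianity of $\boldsymbol{\varepsilon}$, which is plausible after aggregating residuals across many token embeddings and codebook cells and which drives the residual gap to zero so that the $(1-\lambda)^2$ contraction alone delivers the claim, or (ii) absorbing the non-Gaussian slack into the admissible range of $\lambda$, noting that the inequality is automatically satisfied whenever $\lambda<2-W_2^2(P_{\boldsymbol{\varepsilon}},P_\xi)/(\operatorname{tr}(\boldsymbol{\Sigma})+\|\boldsymbol{\mu}\|^2)$. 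Either route then gives the strict inequality on the stated range of $\lambda$.
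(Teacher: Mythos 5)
Your coupling-based route is genuinely different from the paper's proof and is analytically more honest about what $W_2$ actually is, but it stalls exactly where you flag it, and the stall is fatal for the theorem as stated. You correctly reduce the claim to $W_2^2(P_{\boldsymbol{\varepsilon}}, P_\xi) < (2-\lambda)\bigl(\operatorname{tr}(\boldsymbol{\Sigma}) + \|\boldsymbol{\mu}\|^2\bigr)$, but neither of your two proposed escape routes is an argument: (i) ``approximate Gaussianity'' of the aggregated residuals is a heuristic, not a theorem; and (ii) restricting $\lambda$ changes the statement, since the theorem asserts the strict inequality for all $\lambda \in (0,1]$, and a sufficiently non-Gaussian $P_{\boldsymbol{\varepsilon}}$ can make $W_2^2(P_{\boldsymbol{\varepsilon}}, P_\xi)$ exceed $2(\operatorname{tr}(\boldsymbol{\Sigma}) + \|\boldsymbol{\mu}\|^2)$ and empty your admissible range entirely. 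Your Step~1 claim that the nearest-centroid coupling is $W_2$-optimal is also more than a bookkeeping remark --- it is a real lemma (it follows from cyclical monotonicity of the Voronoi assignment) and deserves a line of justification rather than assertion.

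The paper never faces your residual term because it works entirely inside the Gaussian family: it applies the closed-form $W_2^2(P_1,P_2)=\|m_1-m_2\|_2^2 + d_B^2(C_1,C_2)$ as if exact, uses $\mathbf{m}_X=\mathbf{m}_{\hat X}+\boldsymbol{\mu}$ and $C_X=C_{\hat X}+\boldsymbol{\Sigma}$ (which tacitly requires $\operatorname{Cov}(\hat{\mathbf{X}},\boldsymbol{\varepsilon})=0$, true for a centroid-optimal quantizer), further assumes $\boldsymbol{\Sigma}=\sigma^2 I$ so $C_{\hat X}$ and $\boldsymbol{\Sigma}$ co-diagonalize, and then shows the mean term and the Bures term each contract for $\lambda\in(0,1]$. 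In effect, the quantity you cannot bound --- the non-Gaussianity of $P_{\boldsymbol{\varepsilon}}$ --- is precisely what the paper assumes away by invoking the Gaussian $W_2$ formula; the two proofs differ exactly in where they hide the assumption. To close your argument along the paper's lines you would either need to declare all three distributions Gaussian (at which point $W_2^2(P_{\boldsymbol{\varepsilon}},P_\xi)=0$ and your $(1-\lambda)^2$ contraction finishes immediately), or supply a quantitative non-Gaussianity bound that the theorem's stated hypotheses do not provide.
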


Empirically, NAVQ reduces overfitting and improves generalization. As shown in our ablation study (see Appendix \ref{app:more_ablation}), setting $\lambda=1.0$ improves validation accuracy by 0.86\% compared to training without noise, demonstrating the effectiveness of this regularization under extreme compression.



\textbf{Distributed Class Tokens.}
In Transformer-based classification models, such as ViT~\citep{dosovitskiy2020vit}, a special \textit{class token} is prepended to the input sequence and used to aggregate information from all other tokens through attention. 
However, in the context of our Mixed-Precision Attention mechanism, attention between tokens on different devices is performed using vector-quantized embeddings.
If the class token is assigned to a single device, it will attend to full-precision local tokens but only see vector-quantized representations from other devices.
This asymmetric access to information introduces a bias in the class token’s representation, potentially limiting its ability to effectively summarize the entire input sequence. 

To address this issue, we introduce the \textit{Distributed Class Token} mechanism. 
Instead of assigning the class token to a single device, we replicate it across all devices, creating one local copy per device.
Each replica computes attention with full-precision local tokens and quantized non-local tokens.
At the end of the model, all class token replicas are aggregated (e.g., via mean pooling) into a single vector, which is then passed to the final prediction head. This approach not only restores symmetry in access to information but also reduces estimation error in the attention output, improving robustness to quantization artifacts. We formally justify this mechanism in Appendix \ref{app:proof_discls} and prove the following: 
\begin{theorem}[Variance Reduction via Distributed Class Tokens] \label{thm:discls}
Let $\mathbf{h}$ denote the class token embedding of a full-precision global attention computation. Let $\tilde{\mathbf{h}}_{\mathrm{single}}$ be the output of a single-device class token using Mixed-Precision Attention, and let $\tilde{\mathbf{h}}_{\mathrm{dist}}$ be the average of $N$ distributed class token outputs. Then:
\begin{equation}
    \mathbb{E}\bigl[\lVert\tilde{\mathbf{h}}_{\mathrm{dist}}-\mathbf{h}\rVert_2^{2}\bigr] =\frac{1}{N}\mathbb{E}\bigl[\lVert\tilde{\mathbf{h}}_{\mathrm{single}}-\mathbf{h}\rVert_2^{2}\bigr],
\end{equation}
i.e., distributed class tokens reduce the expected attention output error by a factor of $1/N$. 
\end{theorem}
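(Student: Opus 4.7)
The plan is to model each distributed class token output as an unbiased noisy estimate of the ideal full-precision class token $\mathbf{h}$, and then invoke the standard variance-reduction identity for an average of zero-mean, uncorrelated random variables. Concretely, for device $i \in \{1,\dots,N\}$, let $\tilde{\mathbf{h}}_i$ denote the output of its local class token replica, and write $\boldsymbol{\epsilon}_i := \tilde{\mathbf{h}}_i - \mathbf{h}$. By the symmetry of the partitioning (each device holds a random, disjoint block of the sequence and sees its own $T/N$ tokens at full precision and the other $(N{-}1)T/N$ tokens in quantized form), the errors $\boldsymbol{\epsilon}_1,\ldots,\boldsymbol{\epsilon}_N$ are identically distributed and each satisfies $\mathbb{E}[\boldsymbol{\epsilon}_i] = \mathbf{0}$ under the standard assumption that quantization residuals are zero-mean — this is exactly the property enforced by the commitment loss and made explicit in Theorem~\ref{thm:noise_dis}, where residuals are modeled as samples from $\mathcal{N}(\boldsymbol{\mu},\boldsymbol{\Sigma})$ with $\boldsymbol{\mu}\approx\mathbf{0}$ after training.

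The key algebraic step is then the expansion
\begin{equation}
\mathbb{E}\bigl[\lVert\tilde{\mathbf{h}}_{\mathrm{dist}} - \mathbf{h}\rVert_2^2\bigr]
= \mathbb{E}\Bigl[\Bigl\lVert \tfrac{1}{N}\sum_{i=1}^{N}\boldsymbol{\epsilon}_i \Bigr\rVert_2^2\Bigr]
= \tfrac{1}{N^2}\sum_{i=1}^{N}\mathbb{E}\bigl[\lVert\boldsymbol{\epsilon}_i\rVert_2^2\bigr] + \tfrac{1}{N^2}\sum_{i\neq j}\mathbb{E}\bigl[\langle\boldsymbol{\epsilon}_i,\boldsymbol{\epsilon}_j\rangle\bigr].
\end{equation}
Under the assumption that cross-device error terms are uncorrelated (i.e., $\mathbb{E}[\langle\boldsymbol{\epsilon}_i,\boldsymbol{\epsilon}_j\rangle]=0$ for $i\neq j$), the second sum vanishes, and by identical distribution each $\mathbb{E}[\lVert\boldsymbol{\epsilon}_i\rVert_2^2]$ equals $\mathbb{E}[\lVert\tilde{\mathbf{h}}_{\mathrm{single}}-\mathbf{h}\rVert_2^2]$. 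Collecting terms yields the claimed factor of $1/N$.

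The main obstacle is justifying the uncorrelated-errors assumption, since the $N$ class token replicas see overlapping (in fact, identical up to precision) token content, so their errors are not independent in any strict sense. I would address this by making the approximation explicit: each $\boldsymbol{\epsilon}_i$ arises from the quantization residuals of the $(N{-}1)T/N$ non-local tokens as seen from device $i$, and different devices treat different (though overlapping) subsets of tokens as non-local. Under the standard VQ modeling assumption that per-token residuals are i.i.d. zero-mean with covariance $\boldsymbol{\Sigma}$, a first-order linearization of the softmax attention around $\mathbf{h}$ makes $\boldsymbol{\epsilon}_i$ a linear combination of per-token residuals; the cross-term $\mathbb{E}[\langle\boldsymbol{\epsilon}_i,\boldsymbol{\epsilon}_j\rangle]$ is then a sum over shared tokens weighted by the product of attention weights from devices $i$ and $j$, which is zero in expectation once residuals are centered. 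I expect the appendix proof to state the i.i.d. zero-mean residual assumption explicitly, invoke this linearization (or simply assume $\mathbb{E}[\boldsymbol{\epsilon}_i\boldsymbol{\epsilon}_j^\top]=\mathbf{0}$ directly as an idealization), and then conclude with the two-line variance computation above.
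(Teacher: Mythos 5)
The high-level structure of your proposal matches the paper's proof: zero-mean per-token quantization residuals, a first-order Taylor linearization of the softmax attention so that each device's class-token error becomes a linear combination of residuals, and then the standard variance-of-average calculation. The appendix does exactly what you anticipate in your last sentence: it states the zero-mean, mutually-independent residual assumption, carries out the linearization, and — for the averaging step — simply asserts that the $N$ per-device error vectors $\boldsymbol\delta^{(i)}$ are ``independent and identically distributed.'' In other words, it takes the direct-idealization route you list as the alternative, rather than trying to derive uncorrelatedness.

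Where your proposed derivation actually breaks is the claim that the cross-term over shared tokens ``is zero in expectation once residuals are centered.'' That would hold only if $\boldsymbol\epsilon_i$ and $\boldsymbol\epsilon_j$ were built from disjoint residuals. But each token is quantized once on its home device, and the \emph{same} quantized vector is broadcast to every other device, so for any token $k$ living on a device $\ell\notin\{i,j\}$ both $\boldsymbol\epsilon_i$ and $\boldsymbol\epsilon_j$ contain the identical random residual $\delta\mathbf{v}_k$. Its contribution to $\mathbb{E}[\langle\boldsymbol\epsilon_i,\boldsymbol\epsilon_j\rangle]$ is $\alpha_k^{(i)}\alpha_k^{(j)}\,\mathbb{E}[\lVert\delta\mathbf{v}_k\rVert_2^2] = \alpha_k^{(i)}\alpha_k^{(j)}\,d\,\sigma_v^2 > 0$ (plus an analogous key-propagated term): this is a variance, not a cross-product of two independent centered variables, so centering does not make it vanish. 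For $N=2$ no token is non-local to both devices and your argument goes through; for $N\geq 3$ the cross-covariance is strictly positive and uncorrelatedness cannot be derived from the per-token i.i.d.\ assumption — it must be imposed as an additional idealization. Your fallback ``or simply assume $\mathbb{E}[\boldsymbol\epsilon_i\boldsymbol\epsilon_j^\top]=\mathbf{0}$'' is the correct move and is precisely what the paper does.
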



Empirically, our ablation study (see Appendix \ref{app:more_ablation}) confirms that Distributed Class Tokens consistently outperform the single-token variant across all evaluated settings, yielding accuracy improvements between 0.37\% and 7.13\% depending on the compression level and commitment loss weight.
\section{Empirical Evaluation}\label{sec:eval}
This section evaluates the effectiveness of \pjn by answering the following question:  (1) Can \pjn maintain model accuracy under aggressive token compression? (2) How much can \pjn speed up inference under limited bandwidths compared to baselines? (3) How effective are the optimizations in \pjn{}?
We answer these questions through extensive experiments across Transformer models (ViT and GPT2), application domains (vision and NLP tasks), and deployment conditions (varying bandwidth, device count, compression settings, and device heterogeneity).

\subsection{Experimental Setup}\label{sec:settings}
\textbf{Environment.}
\pjn is implemented in PyTorch 2.5 and trained on a single L40S GPU with 40GB memory. For deployment, we simulate distributed inference on personal laptops provisioned with an NVIDIA 1660Ti GPU. We emulate a range of network conditions by enforcing bandwidth caps, enabling evaluation under constrained environments. Unless stated otherwise, experiments use 4 devices in a homogeneous setting. 
We report results under heterogeneous settings in Appendix \ref{app:accuracy}.

\textbf{Transformer Models.}
We evaluate \pjn across both encoder and decoder Transformer architectures:
For encoder architecture, we focus on vision tasks with Vision Transformer (ViT-Base)~\citep{dosovitskiy2020vit}.
For decoder architecture, we conduct experiments on NLP tasks with GPT2-Small (GPT2-S) and GPT2-Medium (GPT2-M)~\citep{radford2019gpt}.

\textbf{Datasets and Metrics.}
We evaluate \pjn on both vision and language tasks.
For vision, we use CIFAR-100~\citep{krizhevsky2009lcifar} and ImageNet-1K~\citep{deng2009imagenet}, reporting top-1 classification accuracy.
For language modeling, we perform next-word prediction using two datasets, English Wikipedia and Wikitext-103~\citep{merity2016wikitext}, and report perplexity (PPL; lower is better).
The evaluation includes three settings: training and evaluating on Wikipedia~\citep{wikidump}, training and evaluating on Wikitext-103, and a zero-shot evaluation where the model is trained on Wikipedia but directly evaluated on the Wikitext-103 validation set without further fine-tuning.
The last zero-shot setting follows the evaluation used in the original GPT2~\citep{radford2019gpt} and serves to assess the model’s generalization to unseen domains.
All experiments are conducted with a fixed random seed (42) for reproducibility. To demonstrate the robustness of \pjn across different runs, results averaged over multiple seeds are reported in Appendix \ref{app:accuracy}.
The memory cost analysis is included in Appendix \ref{app:memory}.

\textbf{Baselines.}
We compare \pjn with both single-device and three multi-device inference approaches.
\begin{itemize}[noitemsep,nolistsep,leftmargin=*,topsep=0pt]
    \item \textbf{Original Model}: The baseline model runs entirely on a single device using float32 precision. We compare with the float32 model as \pjn{} builds on top of this model for a fair comparison. Later, we show \pjn{} can be combined with model quantization. 
    \item \textbf{Tensor Parallelism (TP)}: Represented by Megatron-LM~\citep{shoeybi2019megatron}, which partitions weight matrices across devices and requires two allreduce operations per Transformer layer.
    \item \textbf{Sequence Parallelism (SP)}: Introduced by Voltage~\citep{hu2024voltage}, which partitions the input sequence and performs one AllGather operation per layer.  
    \item \textbf{Block Parallelism (BP)}: Proposed by DeTransformer~\citep{du2024detransformer}, which replaces Transformer blocks with multiple parallel sublayers for distributed execution.
\end{itemize}
For BP, we evaluate two most efficient design variants proposed in~\citep{du2024detransformer}:
(i) \textbf{BP+AllGather (BP+AG)} minimizes communication by performing more local computation, and
(ii) \textbf{BP+SequenceParallel (BP+SP)} reduces local computation at the cost of moderate communication overhead. 
Both variants include a hyperparameter $N_b$ that controls the number of original Transformer blocks retained.
A smaller $N_b$ leads to fewer communications and thus lower latency.
We compare against BP with $N_b = 1, 4$.

\textbf{\pjn Settings.}
For the Noise-Augmented Vector Quantization in \pjn, the codebook size is 1024, representing each transmitted token with 10 bits (i.e., $\mathrm{log}_2 1024$).
We also test with different codebook sizes ranging from 256 to 2048 in Appendix \ref{app:more_ablation}.
The noise magnitude $\lambda$ is 1.0 in the main results and we test other settings including $\lambda \in \{0.0, 0.1, 0.3\}$ in Appendix \ref{app:more_ablation}.
We further evaluate the use of \textit{Grouped VQ}, as introduced in \textit{Background}, which splits each input vector into $G$ groups and applies vector quantization independently to each group using separate codebooks. 
We evaluate with group sizes of 16 and 32, on top of \textit{Vanilla VQ} of a single group. Increasing the number of groups leads to a higher bits per token and thus reduces the overall compression ratio proportionally.
We also test with different commitment loss weights $\beta \in \{0.0001, 0.0002, 0.0005\}$ in Appendix \ref{app:more_ablation} and report the best accuracy performance in \S\ref{sec:accuracy}.
For detailed training settings, see Appendix \ref{app:accuracy}.

\subsection{Results on Accuracy and Communication Costs}\label{sec:accuracy}

We evaluate the \pjn's accuracy with three Transformers, ViT-Base, GPT2-S, and GPT2-M, on vision and NLP benchmarks. 
Note that we only report the baseline accuracy for the original model. 
Existing multi-device baselines, including Megatron-LM~\citep{shoeybi2019megatron} and Voltage~\citep{hu2024voltage}, do not incur any accuracy loss since they merely reorganize computation without altering the model's numerical outputs. 
Therefore, their results are equivalent to the original model and are omitted here for clarity. 
Alongside accuracy, we also report the associated communication overhead, measured as the total amount of data exchanged per token during a single forward pass (i.e., \textit{Total Bits per Token}).
Results are reported in Tables~\ref{tab:vit} and \ref{tab:gpt2}.

\begin{table}[htb]
\caption{Task accuracy and communication overhead on \textbf{CIFAR-100} and \textbf{ImageNet-1K} with \textbf{ViT-Base}.
} \label{tab:vit}
\centering
\scriptsize
\tabcolsep=0.1cm
\begin{tabular}{cc|cc|cc}
\toprule
Model                                & \#Groups & \begin{tabular}[c]{@{}c@{}}Total Bits\\ per Token\end{tabular} & \begin{tabular}[c]{@{}c@{}}Compression\\ Ratio\end{tabular} & CIFAR-100            & ImageNet    \\ \midrule
ViT-Base                             & -        & 294912                                                       & -                                                           & 92.53                & 80.32          \\ \midrule
\multirow{3}{*}{\textbf{\pjn}} & 1   & 120       & 2457.6   & 88.95                & 77.39          \\ 
                               & 16  & 1920      & 153.6    & 90.77                & 78.80          \\
                               & 32  & 3840      & 76.8     & \textbf{91.64}       & \textbf{80.28} \\ 
\bottomrule
\end{tabular}
\end{table}

\textbf{ViT-Base.}
\pjn maintains high accuracy on ViT-Base for image classification (CIFAR-100 and ImageNet-1K), with less than 3.58\% degradation even under 2457.6$\times$ compression, as shown in Table~\ref{tab:vit}.
With 32 groups, \pjn achieves 91.64\% on CIFAR-100 and 80.28\% on ImageNet-1K, closely matching the original performance of 92.53\% and 80.32\%.
To further assess scalability, we fix the compression configuration (32 groups) and evaluate \pjn on CIFAR-100 using varying numbers of devices.
As shown in Table~\ref{tab:diff-device}, \pjn preserves model accuracy within 1.39\% of the original model across different device counts.

\begin{table}[htb]
\caption{Accuracy of \pjn on CIFAR-100 under different numbers of devices.
} \label{tab:diff-device}
\centering
\small
\tabcolsep=0.15cm
\begin{tabular}{c|c|cccc}
\toprule
Model     & ViT-Base & \multicolumn{4}{c}{\begin{tabular}[c]{@{}c@{}}\pjn \\ (\#Groups = 32)\end{tabular}} \\ \midrule
\#Devices & 1        & 2                       & 4                      & 6                      & 8                      \\
Accuracy  & 92.53    & 91.86                   & 91.64                  & 91.35                  & 91.14          \\
\bottomrule
\end{tabular}

\end{table}

\textbf{GPT2.}
Table~\ref{tab:gpt2} summarizes the perplexity (PPL) results of \pjn on the next-token prediction task, i.e., Wikipedia and Wikitext-103, using GPT2-S and GPT2-M.
Overall, \pjn achieves competitive performance under aggressive communication compression.
Note that perplexity is an exponential function of the language modeling loss, i.e., $\text{PPL} = \exp(\mathcal{L})$, and thus small differences in loss can result in amplified changes in PPL.
Specifically, on GPT2-M, the PPL on Wikitext-103 increases from 14.8 (loss = 2.70) to 16.84 (loss = 2.82), reflecting only a 4.4\% increase in loss despite a 102.4$\times$ compression ratio.
Similarly, on Wikipedia, the loss increases marginally from 2.5 to 2.63 (PPL from 12.16 to 13.83), confirming that much of the accuracy is preserved under significant transmission savings.

\begin{table}[htb]
\caption{Task performance (i.e., perplexity) and communication overhead on \textbf{Wikipedia} and \textbf{Wikitext-103} with \textbf{GPT2}.} \label{tab:gpt2}
\centering
\scriptsize
\tabcolsep=0.08cm
\begin{tabular}{cc|cc|ccc}
\toprule
\multirow{2}{*}{Model}               & \multirow{2}{*}{\#Groups} & \multirow{2}{*}{\begin{tabular}[c]{@{}c@{}} Bits per\\Token \end{tabular}} & \multirow{2}{*}{\begin{tabular}[c]{@{}c@{}}Compression\\ Ratio\end{tabular}} & \multirow{2}{*}{Wikipedia} & \multicolumn{2}{c}{Wikitext-103} \\ 
                                     &                           &                                                                               &                                                                              &                            & Fine-Tuned      & Zero-Shot      \\ \midrule
GPT2 - S                             & -                         & 294912                                                                        & -                                                                            & 15.79                      & 18.96           & 58.91          \\ \midrule
\multirow{3}{*}{\textbf{\pjn}} & 1                         & 120                                                                           & 2457.6                                                                       & 21.46                      & 25.98           & 120.7          \\
                                     & 16                        & 1920                                                                          & 153.6                                                                        & 18.84                      & 23.22           & 94.8           \\
                                     & 32                        & 3840                                                                          & 76.8                                                                         & \textbf{17.39}                      & \textbf{20.95}           & \textbf{76.24}          \\ \midrule
GPT2 - M                             & -                         & 786432                                                                        & -                                                                            & 12.16                      & 14.8            & 43.22          \\ \midrule
\multirow{3}{*}{\textbf{\pjn}} & 1                         & 240                                                                           & 3276.8                                                                       & 17.86                      & 21.97           & 96.99          \\
                                     & 16                        & 3840                                                                          & 204.8                                                                        & 14.43                      & 18.03           & 75.29          \\
                                     & 32                        & 7680                                                                          & 102.4                                                                        & \textbf{13.83}                      & \textbf{16.84}           & \textbf{62.29}          \\ 
\bottomrule
\end{tabular}
\end{table}

\textbf{Zero-Shot Generalization.} 
We also evaluate \pjn in the zero-shot setting by directly evaluating the model trained with Wikipedia on the Wikitext-103 validation set.
Here, we observe a larger performance drop compared to the original model. For example, GPT2-M's zero-shot PPL rises from 43.22 to 62.29 with \pjn at 32 groups.
This performance drop suggests a limitation of \pjn in zero-shot generalization: the discretization introduced by VQ reduces the diversity of token representations and hinders out-of-distribution data generalization.

\textbf{Heterogeneous Devices.} 
In heterogeneous settings where devices have different compute capacities, \pjn{}  can adapt by assigning more tokens to stronger devices. 
Our training uses a randomized token-to-device mapping to learn a unified codebook, enabling direct generalization to unseen heterogeneity without retraining. 
Experiments on ImageNet-1K with 4 devices show that \pjn{} maintains within $1.43\%$ accuracy drop compared to the original ViT-Base. 
We further observe that higher heterogeneity increases the full-precision attention rate, leading to better accuracy (See Appendix \ref{app:accuracy} for details).

\subsection{Results on Inference Latency}\label{sec:latency}
We report latency improvement using a 12-layer Transformer encoder with 768 hidden dimensions.
We compare \pjn with multi-device inference baselines and evaluate their latency across three dimensions, varying bandwidth, device count, and input token length, in Figure~\ref{fig:bandwidth-speedup}, \ref{fig:device-speedup}, and \ref{fig:tokenlength-speedup}. 

\begin{figure}[htb]
\centering
\includegraphics[width=0.85\linewidth]{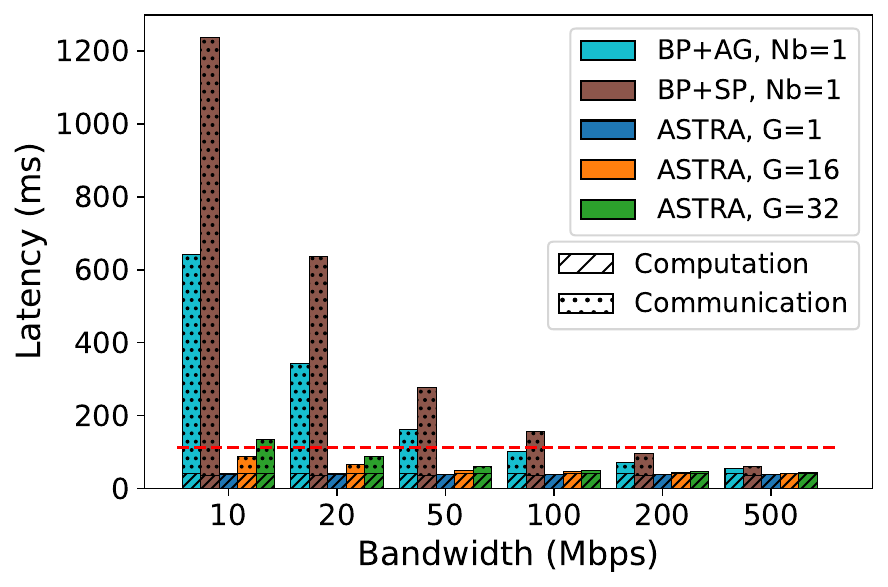}
\caption{Latency breakdown of local computation and inter-device communication time. The red dashed line represents the single-device latency.}
\label{fig:bandwidth-stack}
\end{figure}

\textbf{Varying Bandwidth.}
Figure~\ref{fig:bandwidth-speedup} presents the speedup of multi-device methods over single-device inference, evaluated across inter-device bandwidths ranging from 10 Mbps to 500 Mbps.
We fix the number of devices to 4 and the input token length to 1024. Additional device count and sequence length configurations are provided in Appendix \ref{app:latency}.
Across all bandwidths, \pjn consistently outperforms all baselines and maintains substantial speedup, even under extremely limited bandwidth.
For instance, \pjn achieves a speedup of $1.27-2.74\times$ at 20 Mbps, while all other baselines perform even worse than single-device inference.
Even at 10 Mbps, our method with 16 and 1 quantization groups still delivers $1.26-2.65\times$ speedup, demonstrating strong scalability to bandwidth bottlenecks.

We also visualize the absolute latency breakdown in Figure~\ref{fig:bandwidth-stack}.
Specifically, we depict the latency breakdown for the two fastest baselines, BP+AG and BP+SP when $N_b=1$, as well as \pjn with different groups.
We can see that the communication time dominates in total latency for BP+AG and BP+SP, accounting for as much as $58.55-93.47\%$ of total runtime at low-bandwidth settings below 100 Mbps.
In contrast, \pjn effectively mitigates this communication bottleneck via aggressive compression, thereby significantly reducing total latency.

\begin{table}[htb]
\caption{\pjn's speedup over baseline methods on 4 devices with 1024 tokens.} \label{tab:bandwidth-speedup-exist}
\centering
\small
\tabcolsep=0.1cm
\begin{tabular}{c|cccccc}
\toprule
Bandwidth (Mbps) & 10     & 20     & 50    & 100   & 200   & 500  \\ \midrule
TP                                                         & 342.74 & 177.89 & 73.14 & 37.19 & 19.02 & 8.05 \\
SP                                                         & 171.82 & 89.41  & 37.05 & 19.08 & 9.99  & 4.51 \\
BP+AG, Nb=1                                                & 15.25  & 8.41   & 4.07  & 2.58  & 1.83  & 1.37 \\
BP+SP, Nb=1                                                & 29.37  & 15.66  & 6.95  & 3.96  & 2.45  & 1.53   \\
\bottomrule
\end{tabular}
\end{table}

We further summarize the relative speedup of \pjn over each baseline across different bandwidth in Table~\ref{tab:bandwidth-speedup-exist}.
\pjn's advantage becomes more significant under stricter bandwidth.
The speedup of \pjn over Sequence Parallelism (SP) reflects the benefit of our Mixed-Precision Attention, contributing up to $171.82\times$ latency reduction under low-bandwidth settings.

\textbf{Varying Device Counts.}
Figure~\ref{fig:device-speedup} shows the latency speedup comparison as the number of devices increases from 2 to 8.
We fix the input token length to 1024 and illustrate two representative bandwidth settings, 20 Mbps and 200 Mbps (more results see Appendix \ref{app:latency}).
For both bandwidth, \pjn consistently achieves higher speedup than all baselines.
As the number of devices increases, more computation can be parallelized, leading to greater latency reduction.
For example, under 20 Mbps, \pjn with 1 group improves from $1.72\times$ speedup with 2 devices to $3.69\times$ with 8 devices.

\begin{figure}[htb]
    \centering
    \begin{subfigure}{0.401\linewidth}
        \includegraphics[width=\linewidth]{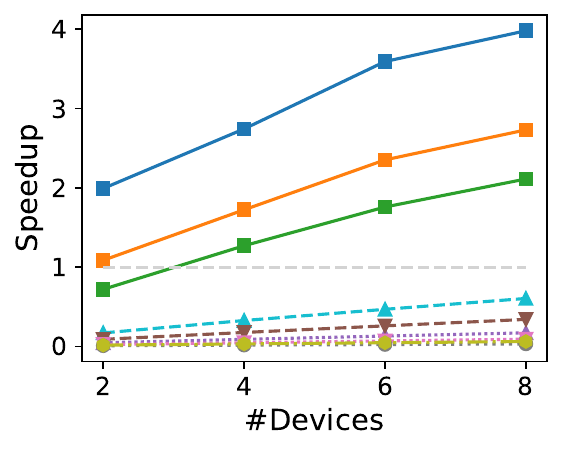}
        \caption{20 Mbps}
    \end{subfigure}
    \hfill
    \begin{subfigure}{0.589\linewidth}
        \includegraphics[width=\linewidth]{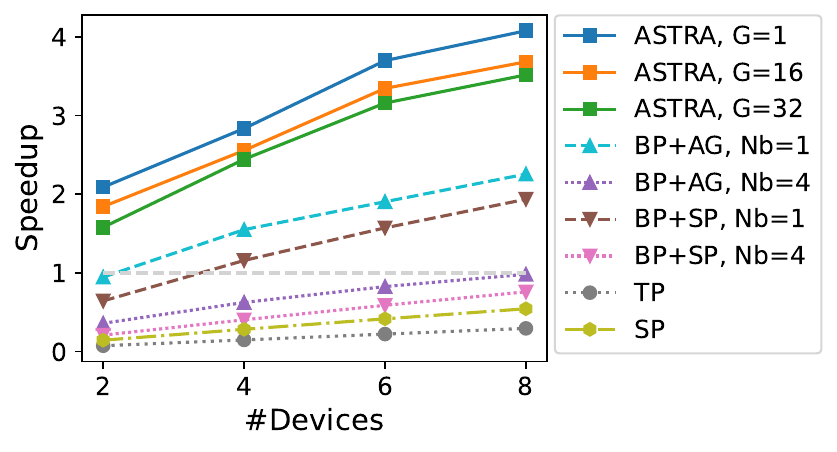}
        \caption{200 Mbps}
    \end{subfigure}
    \caption{Speedup on different numbers of devices (w/ 1024 tokens).}
    \label{fig:device-speedup}
\end{figure}


\textbf{Varying Input Token Length.}
Figure~\ref{fig:tokenlength-speedup} presents the latency speedup comparison as the input token length increases from 256 to 4096.
Similarly, we fix the number of devices to 4 and evaluate under 20 Mbps and 200 Mbps (more bandwidth see Appendix \ref{app:latency}).
Across all sequence lengths, \pjn consistently outperforms existing methods and our superiority becomes more significant at longer input lengths.
In real-world applications, longer input token lengths tend to form a more substantial barrier to achieving low-latency inference.
At 512 tokens and 20 Mbps bandwidth, for instance, \pjn achieves a latency speedup of $1.98\times$ compared to the fastest baseline BP-AG of $0.25\times$, highlighting the practical value of \pjn in real deployment scenarios.

\begin{figure}[htb]
    \centering
    \begin{subfigure}[t]{0.401\linewidth}
        \includegraphics[width=\linewidth]{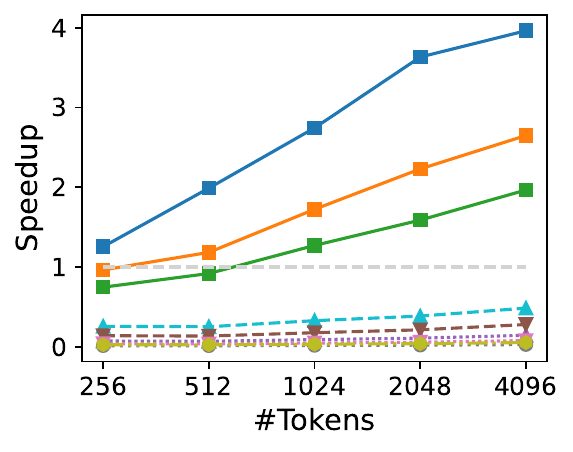}
        \caption{20 Mbps}
    \end{subfigure}
    \hfill
    \begin{subfigure}[t]{0.589\linewidth}
        \includegraphics[width=\linewidth]{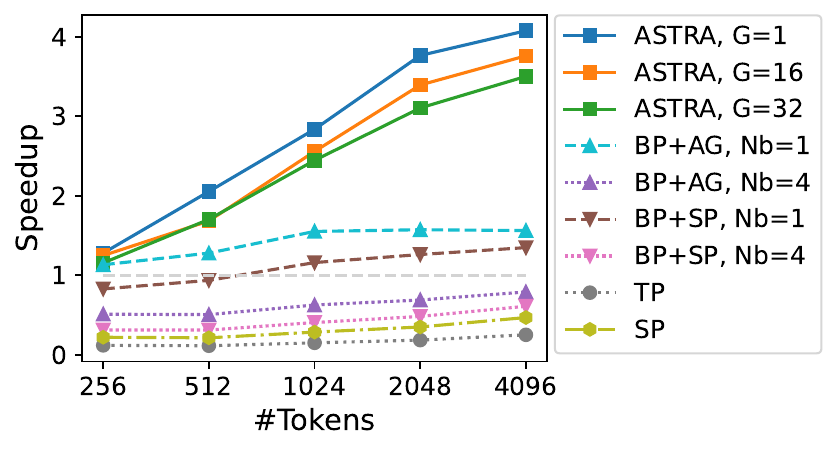}
        \caption{200 Mbps }
    \end{subfigure}
    \caption{Speedup under different input token length on 4 devices.}
    \label{fig:tokenlength-speedup}
\end{figure}

\subsection{Compatibility with Bit Quantization}
\label{sec:combined_with_quantization}
To demonstrate the compatibility of \pjn{} with model compression, we apply post-training quantization to the standard ViT-Base model and our \pjn variants, and evaluate their performance on ImageNet-1K under 8-bit and 4-bit settings. 
Table~\ref{tab:bit-quantization} summarizes the accuracy and latency results, with latency measured under 200 Mbps bandwidth, using 4 devices and an input token length of 1024.

\begin{table}[htb]
\caption{Accuracy and latency of ViT-Base and \pjn on ImageNet-1K under different precision (FP32, 8-bit, 4-bit).} \label{tab:bit-quantization}
\centering
\scriptsize
\tabcolsep=0.11cm
\begin{tabular}{cc|ccc|ccc}
\toprule
\multicolumn{2}{c|}{Model}        & \multicolumn{3}{c|}{Accuracy}     & \multicolumn{3}{c}{Latency (ms) \textsubscript{Speedup}}    \\ 
Name                   & \#Group & FP32  & 8-bit & 4-bit       & FP32  & 8-bit & 4-bit           \\ 
\midrule
ViT-Base               & -       & 80.32 & 80.27 & 80.19    
& 99.9  &79.8 &103.2\\ 
\midrule
\multirow{3}{*}{\pjn} & 1       & 77.39 & 77.32 & 76.82    
                    & 36.7	\textsubscript{2.73$\times$}
                    &50.6	\textsubscript{1.58$\times$}
                    &44.6	\textsubscript{2.31$\times$}           \\
                       & 16      & 78.80 & 78.76 & 78.43  
                    &41.0	\textsubscript{2.44$\times$}
                    &51.7	\textsubscript{1.54$\times$}
                    &50.2	\textsubscript{2.06$\times$}\\
                       & 32      & \textbf{80.28} & \textbf{80.26} &\textbf{79.78}  
                       & 44.5 \textsubscript{2.25$\times$} 
                       &59.3 \textsubscript{1.35$\times$} 
                       &56.9 \textsubscript{1.81$\times$}             \\ 
                       \bottomrule
\end{tabular}
\end{table}

\textbf{Accuracy.} 
8-bit and 4-bit quantization yield minimal accuracy degradation.
When \pjn is combined with bit quantization, performance remains robust. For instance, applying 4-bit quantization to \pjn{} with 32 groups still achieves 79.78\% accuracy.
This supports the claim that \pjn can be layered on top of bit quantization methods to further reduce latency while preserving task performance.

\textbf{Latency.} 
Combining \pjn{} with quantization pushes end-to-end Transformer acceleration beyond either method alone.
For instance, the 4-bit \pjn on 4 devices can achieve $1.81-2.31\times$ speedup over 4-bit ViT-Base on a single device.
Notice that the actual speedup from bit quantization depends on kernel implementation, hardware-specific optimization, and target device. In some cases, e.g., 4-bit ViT-Base, it may even slow down due to conversion or kernel overhead.



\subsection{Scalability to Large Transformer Models}
\label{sec:scalability}
To evaluate the scalability of \pjn to large language models, we experiment with Llama-3-8B \citep{dubey2024llama} for next-token prediction on the English Wikipedia dataset. 
8-bit quantization is enabled for both the baselines and \pjn, to execute inference with NVIDIA TitanX GPUs and keep fair comparisons.
When evaluating the latency, we fixed the input token length to 1024 using 4 devices.

\textbf{Accuracy.} 
Table \ref{tab:llama3-ppl} reports perplexity (PPL, lower is better) together with the communication cost in bits per token as we vary the number of groups in \pjn. 
Compared to the single-device Llama-3-8B baseline, \pjn maintains performance close to the original while significantly reducing communication. 
For example, when the number of groups is 1, \pjn incurs only a small increase in PPL from 5.81 to 7.73, while achieving a $1600\times$ reduction in communication, confirming that the proposed multi-device inference framework can scale to 8B-parameter models. ASTRA also preserves downstream task performance across four benchmarks. (Details see Appendix \ref{app:accuracy}.)

\begin{table}[htb]
    \captionof{table}{Task performance (i.e., perplexity) and communication overhead on \textbf{Wikipedia} with \textbf{Llama-3-8B}.} \label{tab:llama3-ppl}
    \centering
    \scriptsize
    \tabcolsep=0.15cm
    \begin{tabular}{cc|cc|c}
    \toprule
    Model              & \#Groups & Bits per Token & Compression Ratio & PPL \\ 
    \midrule
    Llama-3-8B                            & -      &   1,048,576   & -        &  5.8118    \\ \midrule
    \multirow{3}{*}{\textbf{\pjn}}        & 1      &    640        & 1,638.4  &  7.7336     \\
                                          & 16     &   10,240      & 102.4    &  7.5879      \\
                                          & 32     &   20,480      & 51.2     &  7.4360       \\ 
    \bottomrule
    \end{tabular}
\end{table}

\textbf{Latency.}
Table \ref{tab:llama3-latency} summarizes the prefill latency under varying bandwidth ranging from 10 to 500 Mbps. 
\pjn consistently achieves lower latency than state-of-the-art multi-device baselines at low bandwidth (e.g., 10–100 Mbps). Specifically, \pjn attains $1.13-5.13\times$ speedup over the fastest baseline, i.e., BP, Nb=4. 
Because 8-bit quantization is uniformly applied to all methods, these gains isolate the benefit of \pjn’s communication-efficient design.

\begin{table}[htb]
    \captionof{table}{Prefill latency (s) comparison between \pjn and baselines across different bandwidths (Mbps).} \label{tab:llama3-latency}
    \centering
    \scriptsize
    \tabcolsep=0.15cm
    \begin{tabular}{c|cccccc}
    \toprule
    Bandwidth (Mbps) & 10     & 20     & 50    & 100   & 200   & 500  \\ \midrule
    Llama-3-8B       & \multicolumn{6}{|c}{4.578}      \\ \midrule
    TP               & 430.952 & 216.291 & 87.449 & 44.499 & 23.025 & 10.140 \\
    SP               & 28.256  & 14.939  & 6.888  & 4.215  & 2.857  & 2.052  \\
    BP, Nb=4         & 4.642   & 3.047   & 2.085  & 1.753  & 1.586  & \textbf{1.485}  \\
    BP, Nb=8         & 8.011   & 4.780   & 2.773  & 2.101  & 1.762  & 1.561  \\ \midrule
    \pjn, G=1        & \textbf{1.563}   & \textbf{1.549}   & \textbf{1.547}  & \textbf{1.545}  & \textbf{1.541}  & 1.540  \\
    \pjn, G=16       & 1.661   & 1.659   & 1.595  & 1.572  & 1.559  & 1.548  \\
    \pjn, G=32       & 1.940   & 1.796   & 1.661  & 1.630  & 1.603  & 1.583  \\
    \bottomrule
    \end{tabular}
\end{table}

\textbf{Non-Ideal Network Conditions.}
We stress the inter-device network with both packet loss and time-varying bandwidth. Under a $5\%$ random packet loss rate without retransmission, \pjn preserves task performance with only minor degradation in perplexity. 
Furthermore, as shown in Figure~\ref{fig:varying-bandwidth}, under fluctuating bandwidth over time, \pjn achieves higher throughput than both single-device inference and multi-device baselines, indicating that its communication-efficient design remains effective in realistic, non-ideal networks. (Details see Appendix \ref{app:latency}.)

\begin{figure}[htb]
    \centering
    \includegraphics[width=.8\linewidth]{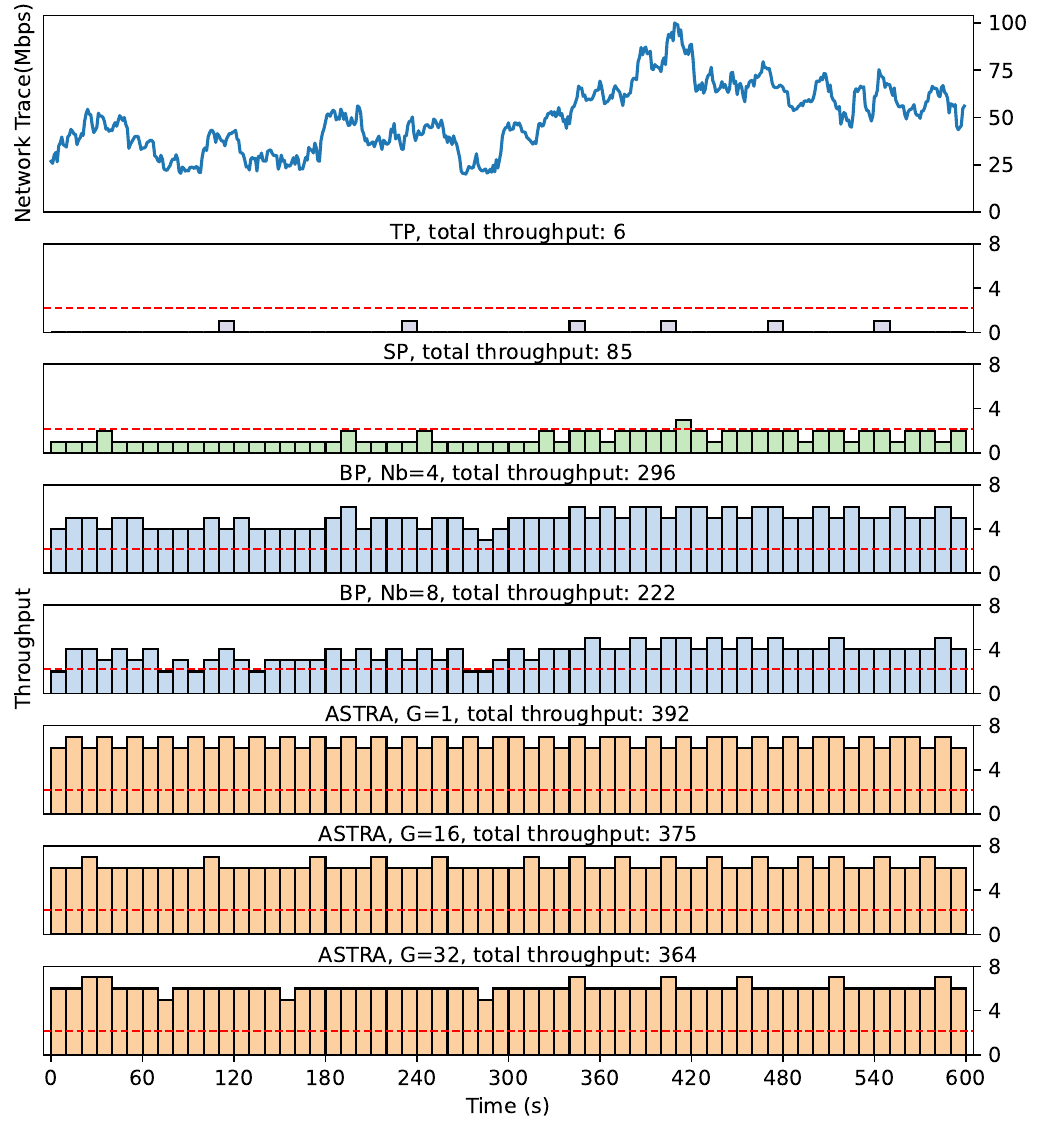}
    \captionof{figure}{Overall request throughput comparison under dynamic network bandwidth with a fixed 600-second trace. Red dashed line represents the single-device baseline.}
    \label{fig:varying-bandwidth}
\end{figure}
\section{Discussion and Limitations}
\label{sec:discussion}

\textbf{Relation to KV-cache Compression.}
KV-cache compression methods are technically related to ASTRA because both reduce the cost of key-value information used in attention~\citep{hooper2024kvquant,liu2024cachegen,saxena2024eigen}.
However, they have different primary goals.
Most KV-cache compression methods target memory usage, cache loading, or cache transmission in long-context LLM serving, while ASTRA targets layer-wise inter-device communication in bandwidth-limited multi-device inference.
This distinction matters because moderate compression is often insufficient to remove communication as the latency bottleneck under sub-100 Mbps bandwidth.
Representative KV-cache compression methods report about $1.6\times$--$10.7\times$ compression, while ASTRA uses more aggressive compression, e.g., at least $51.2\times$ for Llama-3-8B and $76.8\times$ for ViT-Base and GPT2-S with $G=32$.
Therefore, we view KV-cache compression as a related and potentially complementary direction, but not as a main baseline for the bandwidth-limited multi-device setting studied in this paper.

\textbf{Clarification for Generative Models.}
For decoder-only generative models, ASTRA currently accelerates the prefill stage rather than the full end-to-end generation process.
This limitation comes from the sequence parallelism used by ASTRA.
During prefill, computation can be distributed across input tokens, whereas autoregressive decoding still processes newly generated tokens sequentially.
Therefore, the latency results for Llama-3-8B should be interpreted as prefill or time-to-first-token acceleration, rather than full generation latency acceleration.
This scope is still useful for interactive applications where reducing time-to-first-token improves responsiveness.
Extending ASTRA to decode-phase acceleration, especially for long-output generation workloads, is an important direction for future work.

\textbf{Fine-Tuning Requirement.}
ASTRA requires fine-tuning to align the inserted VQ modules with the pretrained Transformer, so it is not training-free or plug-and-play in the sense of direct deployment without adaptation.
However, this adaptation is much cheaper than training the original model from scratch.
For example, ASTRA adaptation on Llama-3-8B requires 52 GPU-hours on L40S, while Llama-3-8B pretraining is reported to require 1.3M GPU-hours on H100.
This comparison indicates that ASTRA requires low-cost fine-tuning rather than full retraining.

Moreover, our architecture-agnostic claim means that the same ASTRA formulation can be applied to different Transformer architectures after adaptation.
The current formulation does not require retraining when only the bandwidth changes, but still requires separately adapted models for different device counts because the token partition structure changes.
In practice, one can pre-adapt a small set of variants for common device counts and reuse them across deployments, keeping the adaptation cost controllable.
\section{Conclusion}
We present \pjn, a communication-efficient framework for accelerating multi-device Transformer inference.
By integrating sequence parallelism with a novel Mixed-Precision Attention mechanism, \pjn significantly reduces inter-device communication while preserving accuracy.
Extensive experiments across vision and NLP tasks demonstrate that \pjn delivers substantial end-to-end latency improvements over existing baselines, achieving up to 2.64$\times$ speedup over single-device inference and up to 15.25$\times$ over state-of-the-art multi-device methods, under constrained bandwidth as low as 10 Mbps.
Our results highlight the potential of \pjn for practical deployment of Transformer models in real-world, bandwidth-limited environments.

\section*{Acknowledgment}
This material is based upon work supported by the National Science Foundation under Grant No. CNS-2312396, CNS-2338512, IIS-2435822, and CCF-2449995.
Any opinions, findings, and conclusions or recommendations expressed in this material are those of the author(s) and do not necessarily reflect the views of the National Science Foundation.

\section*{Impact Statement}

This paper presents work whose goal is to advance the field of Machine
Learning Systems. There are many potential societal consequences of our work, none of which we feel must be specifically highlighted here.

\nocite{langley00}

\bibliography{main}
\bibliographystyle{icml2026}

\newpage
\appendix
\onecolumn

\section{Additional Related Work}\label{app:related-work}
\textbf{Deploying Transformers on Edge Devices.}
Substantial efforts have been devoted to enabling Transformer models on edge devices through model compression and architecture simplification. 
For example, Michel et al.~\citep{michel2019sixteen} proposed pruning attention heads to reduce computational cost, while Q8BERT~\citep{zafrir2019q8bert} quantizes BERT weights from 32-bit to 8-bit to accommodate memory-limited environments. 
Other approaches, such as parameter factorization in ALBERT~\citep{lan2020albert} and knowledge distillation methods~\citep{lin2022distill}, aim to construct lightweight variants of Transformer architectures suitable for resource-constrained hardware. 
These techniques focus on discovering a compact model that maintains acceptable task performance under tight latency or memory budgets.

In contrast, \pjn{} targets distributed inference while preserving the original model architecture.
The compressed transformer models from the above techniques can also leverage \pjn’s distributed inference system for further acceleration, as long as they retain the core transformer architecture. 
This makes \pjn{} an orthogonal solution that offers further performance improvements without requiring re-design or re-training for new hardware targets.

\textbf{Distributed Inference Systems.}
Distributed inference has emerged as a practical strategy to accelerate computation. 
Early works such as DeepThings~\citep{zhao2018deepthings} exploited the partial receptive fields of convolutional neural networks (CNNs) to parallelize inference by splitting intermediate feature maps across multiple devices. 
The follow-up works, including CoEdge~\citep{zeng2020coedge}, DeepSlicing~\citep{zhang2021deepslicing}, and EdgeFlow~\citep{hu2022distributed}, further incorporated network heterogeneity and device resource profiling to optimize system throughput.
However, these methods are designed specifically for CNN-based models and are not applicable to the self-attention structure in Transformers.

Recent work has begun to explore multi-device inference for Transformers by adapting techniques from distributed training.
PipeEdge~\citep{hu2022pipeedge} utilizes pipeline parallelism to improve throughput, but its efficiency relies on large batch sizes and does not benefit per-request latency. 
Other systems, such as DeepSpeed~\citep{aminabadi2022deepspeed} and Megatron-LM~\citep{shoeybi2019megatron} apply tensor parallelism by splitting weight matrices across devices, which leads to frequent and expensive inter-device communication.
To reduce this cost, Voltage~\citep{hu2024voltage} introduces sequence parallelism by distributing input tokens across devices and minimizing the number of cross-device interactions per Transformer block. 
Galaxy~\citep{ye2024galaxy} further combines tensor and sequence parallelism, while DeTransformer~\citep{du2024detransformer} even modifies the Transformer block structure to enable more efficient distribution.

Despite their contributions, these methods still rely on high-bandwidth connections to achieve meaningful speedups. 
In contrast, \pjn{} significantly reduces the required bandwidth to only 10 Mbps, while still achieving $2.64\times$ end-to-end latency speedup, making it far more practical for real-world deployments in constrained edge environments.

\section{Proof for Noise-Augmented Vector Quantization} \label{app:proof_noise}


\begin{thmrep_noise_dis}[Noise-Augmented Embeddings Improve Distributional Fidelity]
Let $\hat{\mathbf{X}}$ denote the quantized embedding of $\mathbf{X}$, and let $\tilde{\mathbf{X}} = \hat{\mathbf{X}} + \lambda \xi$ with $\xi \sim \mathcal{N}(\boldsymbol{\mu}, \boldsymbol{\Sigma})$ sampled from the quantization residuals. Then the 2-Wasserstein distance between the original embedding distribution $P_{\mathbf{X}}$ and the perturbed distribution $P_{\tilde{\mathbf{X}}}$ satisfies:
\begin{equation}
    W_2^2(P_\mathbf{X}, P_{\tilde{\mathbf{X}}}) < W_2^2(P_\mathbf{X}, P_{\hat{\mathbf{X}}}),
\end{equation}
i.e., the noise-augmented distribution is statistically closer to the true distribution than the raw quantized embedding.
\end{thmrep_noise_dis}

\begin{proof}[Proof of Theorem \ref{thm:noise_dis}] 

Let 
\begin{equation}
    \begin{aligned}
    \mathbf{m}_X = \mathbb{E}[\mathbf{X}], \quad 
    \mathbf{m}_{\hat X} = \mathbb{E}[\hat{\mathbf{X}}], \quad
    \mathbf{m}_{\tilde X} = \mathbb{E}[\tilde{\mathbf{X}}],
    \end{aligned} 
\end{equation}
and let 
\(
    C_X,\;C_{\hat X},\;C_{\tilde X}
\) 
denote the corresponding covariance matrices.  
By definition of the quantization error 
\(
    \boldsymbol{\varepsilon} = \mathbf{X} - \hat{\mathbf{X}}
\)
we have
\(
    \mathbf{m}_X = \mathbf{m}_{\hat X} + \mu
\)
and
\(
    C_X = C_{\hat X} + \Sigma.
\)
As the injected noise
\(
    \xi \sim \mathcal N(\mu,\Sigma),
\),
we know that
\begin{equation}
    \begin{aligned}
        \mathbf{m}_{\tilde X} = \mathbf{m}_{\hat X} + \lambda \mu,\quad 
        C_{\tilde X} = C_{\hat X} + \lambda^{2}\Sigma.
    \end{aligned} 
\end{equation}

The Wasserstein distance between two Gaussian distributions can be computed by 
\begin{equation}
    \begin{aligned}
        W_2^2(P_1, P_2) &= \| m_1 - m_2 \|_2^2 \\ & \quad + \mathop{\mathrm{trace}} \left( C_1 + C_2 - 2 \left( C_2^\frac12 C_1 C_2^\frac12 \right)^\frac12 \right) \\
        &= \| m_1 - m_2 \|_2^2 + d_B^2(C1,C2),
    \end{aligned} 
\end{equation}
where $m$ and $C$ are mean and covariance of the distributions, $d_B$ is the Bures metric.

To prove $W_2^2(P_\mathbf{X}, P_{\tilde{\mathbf{X}}}) < W_2^2(P_\mathbf{X}, P_{\hat{\mathbf{X}}})$, we will first show the mean term of $\tilde{\mathbf{X}}$ is smaller, then the Bures term is smaller.

\vspace{0.5em}\noindent
\textbf{Step 1 (mean term is smaller).}  
Using the mean part of the Gaussian $W_{2}$ formula, we have
\begin{equation}
    \begin{aligned}
    \|\mathbf{m}_X-\mathbf{m}_{\hat X}\|_2^2
    - & \|\mathbf{m}_X-\mathbf{m}_{\tilde X}\|_2^2 \\
    &=\|\mu\|_2^{2}-(1-\lambda)^2\|\mu\|_2^{2} \\
    &=(2\lambda-\lambda^{2})\|\mu\|_2^{2}>0,
    \end{aligned} 
\end{equation}
because $0 < \lambda \le 1$. Then we prove that
\begin{equation}
    \begin{aligned}
    \|\mathbf{m}_X-\mathbf{m}_{\tilde X}\|_2^2 <
    \|\mathbf{m}_X-\mathbf{m}_{\hat X}\|_2^2 
    \end{aligned} 
\end{equation}

\vspace{0.5em}\noindent
\textbf{Step 2 (Bures term is smaller).} 

For analytical clarity, we assume that the quantization
errors $\varepsilon$ are independent and identically distributed across
dimensions, i.e.  $\varepsilon_k \stackrel{\text{i.i.d.}}{\sim} 
\mathcal N(0,\sigma^{2})$.
Under this assumption, the global error covariance becomes
$\Sigma = \sigma^{2} I$, which commutes with $C_{\hat X}$, namely $C_{\hat X}\Sigma = \Sigma C_{\hat X}$.
Consequently, $C_{\hat X}$ and $\Sigma$ can be diagonalized by the same orthonormal eigenbasis $U$. 
Then we have
\begin{equation}
    \begin{aligned}
    U^{\!\top} C_{\hat X} U &= \operatorname{diag} (\sigma_{\hat X,i}^{2}),  \\
    U^{\!\top} \Sigma U &= \operatorname{diag}(\sigma_i^2),\\
    \sigma_{\hat X,i},\sigma_i &\ge 0,
    \end{aligned} 
\end{equation}
then we obtain
\begin{equation}
    \begin{aligned}
    U^{\!\top} C_X U &= \operatorname{diag}(\sigma_{X,i}^{2}),\; \sigma_{X,i}^{2} = \sigma_{\hat X,i}^{2} + \sigma_i^2, \\
    U^{\!\top}C_{\tilde X}U &= \operatorname{diag}(\sigma_{\tilde X,i}^{2}),\; \sigma_{\tilde X,i}^{2} = \sigma_{\hat X,i}^{2} + \lambda^{2}\sigma_i^2.
    \end{aligned} 
\end{equation}

For diagonal matrices the Bures term in the $W_{2}$ expression reduces to a sum of squared differences of \emph{standard deviations}:
\begin{equation}
    \begin{aligned}
    d_B^{2}(C_A,C_B)  = \sum_{i} \bigl(\sigma_{A,i} - \sigma_{B,i} \bigr)^{2}.
    \end{aligned} 
\end{equation}
Then we have
\begin{equation}\label{eq:cov}
    \begin{aligned}
    d_B^{2}(C_X, & C_{\hat X}) \; - \; d_B^{2}(C_X, C_{\tilde X}) \\
     &= \sum_{i} \Bigl((\sigma_{X,i} - \sigma_{\hat X,i})^{2} 
     - (\sigma_{X,i} - \sigma_{\tilde X,i})^{2}\Bigr)\\
    &= \sum_{i}(\sigma_{\tilde X,i} - \sigma_{\hat X,i}) \Bigl(2\sigma_{X,i} - (\sigma_{\tilde X,i}+\sigma_{\hat X,i})\Bigr).
    \end{aligned} 
\end{equation}
Because 
\(
    \sigma_{\hat X,i} \le \sigma_{\tilde X,i} \le \sigma_{X,i}
\),
we have $\sigma_{\tilde X,i} - \sigma_{\hat X,i} > 0$ and 
$2\sigma_{X,i}-(\sigma_{\tilde X,i}+\sigma_{\hat X,i})>0$,
therefore Equation~\ref{eq:cov} is positive. Thus, we prove that
\begin{equation}
    \begin{aligned}
    d_B^{2}(C_X,C_{\tilde X})<d_B^{2}(C_X,C_{\hat X}).
    \end{aligned} 
\end{equation}

\textbf{Summary.} Since both the mean part and the Bures part are strictly smaller for
$(\mathbf{m}_{\tilde X},C_{\tilde X})$ than for $(\mathbf{m}_{\hat X},C_{\hat X})$, hence we have completed the proof for
\begin{equation}
    \begin{aligned}
    W_2^2(P_\mathbf{X}, P_{\tilde{\mathbf{X}}})
    < W_2^2(P_\mathbf{X}, P_{\hat{\mathbf{X}}}),
    \end{aligned} 
\end{equation}
which completes the proof.  \qedhere
\end{proof}

\section{Proof for Distributed Class Tokens} \label{app:proof_discls}

\begin{thmrep_discls}[Variance Reduction via Distributed Class Tokens]
Let $\mathbf{h}$ denote the class token embedding of a full-precision global attention computation. Let $\tilde{\mathbf{h}}_{\mathrm{single}}$ be the output of a single-device class token using Mixed-Precision Attention, and let $\tilde{\mathbf{h}}_{\mathrm{dist}}$ be the average of $N$ distributed class token outputs. Then:
\begin{equation}
    \mathbb{E}\bigl[\lVert\tilde{\mathbf{h}}_{\mathrm{dist}}-\mathbf{h}\rVert_2^{2}\bigr] =\frac{1}{N}\mathbb{E}\bigl[\lVert\tilde{\mathbf{h}}_{\mathrm{single}}-\mathbf{h}\rVert_2^{2}\bigr],
\end{equation}
i.e., distributed class tokens reduce the expected attention output error by a factor of $1/N$. 
\end{thmrep_discls}

\begin{proof}[Proof of Theorem \ref{thm:discls}] 
$ $\newline
\textbf{Setup.}
Tokens are evenly partitioned: $\mathbf{X}=\bigcup_{i=1}^{N}\mathbf{X}^{(i)}$, $|\mathbf{X}^{(i)}|=T/N$.
Each device stores local keys $\mathbf{k}_j$ and values $\mathbf{v}_j$ in full precision for $j\in\mathbf{X}^{(i)}$, and transmits quantized versions $\tilde{\mathbf{k}}_j=\mathbf{k}_j+\delta\mathbf{k}_j$ and $\tilde{\mathbf{v}}_j=\mathbf{v}_j+\delta\mathbf{v}_j$ to other devices, where $\delta\mathbf{k}_j$ and $\delta\mathbf{v}_j$ are the error introduced by quantization.
For every non-local token, we assume
\begin{equation}
    \begin{aligned}
    \mathbb{E}[\delta\mathbf{k}_j] &= \mathbf0,\\
    \mathbb{E}[\delta\mathbf{v}_j] &= \mathbf0,\\
    \operatorname{Cov}(\delta\mathbf{k}_j) &= \sigma_k^2\mathbf{I},\\
    \operatorname{Cov}(\delta\mathbf{v}_j) &= \sigma_v^2\mathbf{I},
    \end{aligned}
\end{equation}
and errors are mutually independent.

The variances $\sigma_k^2, \sigma_v^2$ are bounded according to the classical high-rate VQ theory~\citep{zador1982asymptotic,gersho2012vector}. It shows that, under mild assumptions on the feature distribution, the mean-squared quantization error of an optimal $K$-level VQ in dimension $d$ satisfies
\begin{equation}
    \mathbb{E}\|\mathbf{X} - \hat{\mathbf{X}}\|_2^2 \le C_d \cdot \sigma_\mathbf{X}^2 \cdot K^{-2/d},
\end{equation}
where $\hat{\mathbf{X}}$ denotes the quantized embedding of $\mathbf{X}$, and $C_d$ is a constant depending on the dimension $d$.
This implies a per-dimension variance bound
\begin{equation}\label{eq:var-bound}
  \sigma_k^2, \sigma_v^2 = \frac{1}{d}\mathbb{E}\|\mathbf{X} - \hat{\mathbf{X}}\|_2^2 \le \tilde{C} \cdot \sigma_\mathbf{X}^2 \cdot K^{-2/d}.
\end{equation}

\textbf{Full-Precision Attention.}
For a query $\mathbf{q}$, the attention logits are $a_j=\mathbf{q}^\top\mathbf{k}_j/\sqrt{d}$, attention weights $\alpha_j=\operatorname{softmax}(a_j)$, and the output $\mathbf{h}=\sum_{j=1}^{T}\alpha_j\,\mathbf{v}_j$.

\textbf{Mixed-Precision Attention via First-Order Taylor Expansion.}
For a non-local token, the logit is perturbed by the key noise,
\begin{equation}
\tilde{a}_j
=\frac{\mathbf{q}^\top(\mathbf{k}_j+\delta\mathbf{k}_j)}{\sqrt{d}}
=a_j+\underbrace{\frac{\mathbf{q}^\top\delta\mathbf{k}_j}{\sqrt{d}}}_{=:e^{(k)}_j}.
\end{equation}
Because $e^{(k)}_j$ is small, we could linearise the softmax function by first-order Taylor expansion.
Specifically, the softmax funcion is $\alpha_j=\exp(a_j)/\!\sum_j\exp(a_j)$,  and its Jacobian is $\partial\alpha_j/\partial a_k=\alpha_j(\delta_{jk}-\alpha_k)$, where $\delta_{jk}$ is the Kronecker delta.
Therefore, we have the perturbed attention weights,
\begin{equation}
    \begin{aligned}
    \tilde\alpha_j
    &\approx \alpha_j + \sum_{k}\alpha_j(\delta_{jk}-\alpha_k)\,e^{(k)}_k \\
    &=\alpha_j\;+\;
    \alpha_j\!\Bigl(e^{(k)}_j-\sum_{k}\alpha_ke^{(k)}_k\Bigr) \\
    &=:\alpha_j+\delta\alpha_j.
    \end{aligned}
\end{equation}
The terms $\delta\alpha_j$ remain zero-mean and mutually independent.

Then the mixed-precision output is
\begin{equation}
\tilde{\mathbf{h}}
=\sum_{j}\tilde{\alpha}_j\,
      (\mathbf{v}_j+\delta\mathbf{v}_j),
\end{equation}
where $\delta\mathbf{v}_j=\mathbf{0}$ if $j$ is the local token index. 

\textbf{Attention Output Error.}
Subtracting $\mathbf{h}$ and discarding higher-order noise products, the first-order output error is
\begin{equation}
    \begin{aligned}
    \boldsymbol\delta
    &:=\tilde{\mathbf h}-\mathbf h \\
    &=\sum_{\text{non-local}\ j}
    \bigl(\alpha_j\,\delta\mathbf{v}_j
          +\delta\alpha_j\,\mathbf{v}_j\bigr) \\
    &= \underbrace{\sum_{j} \alpha_j\,\delta\mathbf{v}_j}_{\text{V-error}:\ \mathbf e^{(v)}}
    +\underbrace{\sum_{j}
    \alpha_j\!\bigl(e^{(k)}_j
    -\!\sum_{k}\alpha_k\,e^{(k)}_k\bigr)\mathbf{v}_j}_{\text{K-propagated error}:\ \mathbf e^{(k)}}.
    \end{aligned}
\end{equation}
\textit{Both error components are zero-mean random vectors and each coordinate has variance bounded by $C_1\sigma_v^2 + C_2\sigma_k^2$, where $C_1$ and $C_2$ are deterministic constants.}

Specifically, for the first value-error component $\mathbf e^{(v)} =\sum_{j\in\mathcal N}\alpha_j\,\delta\mathbf v_j,$ where \(\mathcal N\) is the set of \(m=\tfrac{N-1}{N}T\) non-local tokens per device and each \(\delta\mathbf v_j\) is an independent, zero-mean random vector with \(\operatorname{Cov}(\delta\mathbf v_j)=\sigma_v^{2}\,\mathbf I\).
Then for an arbitrary coordinate \(c\in\{1,\dots,d\}\), since the noises are independent, we have
\begin{equation}
    \begin{aligned}
    \operatorname{Var}\!\bigl([\mathbf e^{(v)}]_c\bigr) 
    &=   \operatorname{Var}\Bigl(\,\sum_{j\in\mathcal N}
            \alpha_j\,\bigl[\delta\mathbf v_j\bigr]_c\Bigr) \\
    &=   \sum_{j\in\mathcal N}\alpha_j^{2}\,
            \operatorname{Var}\bigl([\delta\mathbf v_j]_c\bigr) \\
    &=   \sigma_v^{2}\sum_{j\in\mathcal N}\alpha_j^{2}.
    \end{aligned}
\end{equation}
Since the attention weights \(0\le\alpha_j\le1\) and there are exactly \(m\) non-local tokens, we have

\begin{equation}
\begin{aligned}
    \operatorname{Var}\!\bigl([\mathbf e^{(v)}]_c\bigr)
    \;\le\;
    \sigma_v^{2}\,m\,
    \max_{j\in\mathcal N}\alpha_j^{2}
    \;=\;C_{1}\,\sigma_v^{2},
    \\
    \text{with }\ C_{1}:=m\,\max_{j\in\mathcal N}\alpha_j^{2}.
\end{aligned}
\end{equation}
\textit{The constant \(C_{1}\) is deterministic since it depends only on the current softmax weights.
And every coordinate of the value-error term is bounded in variance by \(C_{1}\sigma_v^{2}\).}

For the second key-propagated term, recall the first-order perturbation of each softmax weight
\begin{equation}
\delta\alpha_j
      \;=\;
      \alpha_j\Bigl(
          e^{(k)}_j
          -\sum_{k}\alpha_k\,e^{(k)}_k
      \Bigr),
\quad
e^{(k)}_j:=\tfrac{\mathbf q^{\!\top}\delta\mathbf k_j}{\sqrt d},
\end{equation}
where the key-noise scalars \(e^{(k)}_k\) are i.i.d., zero-mean with variance \(\sigma_k^{2}\).
Then for one output coordinate \(c\), we need the variance of
\(\bigl[\mathbf e^{(k)}\bigr]_c
    =\sum_{j\in\mathcal N}\delta\alpha_j\,v_{j,c}\).

First, since \(\delta\alpha_j\) is a linear combination of independent noises,
\begin{equation}
\operatorname{Var}[\delta\alpha_j]
  =\alpha_j^{2}\,\sigma_k^{2}
     \Bigl(1+\textstyle\sum_{k}\alpha_k^{2}\Bigr)
  \;\le\;2\,\alpha_j^{2}\sigma_k^{2},
\end{equation}
because \(\sum_{k}\alpha_k^{2}\le1\).

Then since each addend \(\delta\alpha_j\,v_{j,c}\) is zero-mean, we have
\begin{equation}
    \begin{aligned}
    \operatorname{Var}\bigl([\mathbf e^{(k)}]_c\bigr)
    &=\sum_{j\in\mathcal N}
    v_{j,c}^{\,2}\,
    \operatorname{Var}[\delta\alpha_j] \\
    \;&\le\;
    2\,\sigma_k^{2}
    \bigl(\max_{j\in\mathcal N}v_{j,c}^{\,2}\bigr)
    \sum_{j\in\mathcal N}\alpha_j^{2}.
    \end{aligned}
\end{equation}
With \(\sum_{j\in\mathcal N}\alpha_j^{2}\le m\max_{j}\alpha_j^{2}\),
\begin{equation}
    \begin{aligned}
    \operatorname{Var}\bigl([\mathbf e^{(k)}]_c\bigr)
    \;&\le\;
    2\,\sigma_k^{2}\,m\,
    \bigl(\max_{j\in\mathcal N}v_{j,c}^{\,2}\bigr)
    \bigl(\max_{j\in\mathcal N}\alpha_j^{2}\bigr) \\
    \;&=\;C_{2}\,\sigma_k^{2},
    \end{aligned}
\end{equation}
where
\begin{equation}
C_{2}:=
  2\,m\,
  \max_{j\in\mathcal N}\bigl(\alpha_j^{2}v_{j,c}^{\,2}\bigr).
\end{equation}
\textit{The constant \(C_{2}\) is deterministic since it depends only on the current softmax weights and the fixed value vectors, not on the random noise. And every coordinate of the key-propagated error is bounded in variance by \(C_{2}\sigma_k^{2}\).}

In conclusion, we prove that the mixed-precision attention output error $\boldsymbol\delta$ decomposes into a value-error term and a key-propagated term, and that each coordinate satisfies
\begin{equation}
  \mathrm{Var}([\boldsymbol\delta]_c) \le C_1 \sigma_v^2 + C_2 \sigma_k^2,
\end{equation}
where $C_1, C_2$ are deterministic constants depending only on model parameters. Since $\sigma_k^2, \sigma_v^2$ are bounded in Equation~\ref{eq:var-bound}, the attention computation error is properly bounded and decreases with larger codebook size $K$.

\textbf{Single Class Token Attention Output Error.}
Its attention output error vector has $m=\tfrac{N-1}{N}T$ independent coordinates (i.e., the number of non-local tokens), each with per-coordinate variance $\sigma^2 := \operatorname{Var}\!\bigl([\mathbf e^{(v)}]_c\bigr) + \operatorname{Var}\!\bigl([\mathbf e^{(k)}]_c\bigr)$. Hence
\begin{equation}\label{equ:single-cls-error}
\mathbb{E}\!\bigl[\lVert\boldsymbol\delta_{\mathrm{single}}\rVert_2^{2}\bigr]
= m\,d\,\sigma^2. 
\end{equation}

\textbf{Distributed Class Tokens Attention Output Error.}
Let $\boldsymbol\delta^{(i)}$ be the error vector for the $i$-th device. Vectors $\boldsymbol\delta^{(i)}$ are independent and identically distributed. The averaged class token output is $\bar{\boldsymbol\delta}=\tfrac1N\sum_{i=1}^N\boldsymbol\delta^{(i)}$, then we have
\begin{equation}\label{equ:dist-cls-error}
\mathbb{E}\!\bigl[
\lVert\bar{\boldsymbol\delta}\rVert_2^{2}\bigr]
=\frac{1}{N^{2}}\sum_{i=1}^{N}
  \mathbb{E}\!\bigl[\lVert\boldsymbol\delta^{(i)}\rVert_2^{2}\bigr]
= \frac1N\,m\,d\,\sigma^2. 
\end{equation}

\textbf{Attention Error Comparison.}
According to Equations~\ref{equ:single-cls-error} and \ref{equ:dist-cls-error}, using distributed class tokens yields a factor of $1/N$ in the expected attention output error compared to the single class token, indicating a more accurate attention computation under the mixed-precision attention mechanism.
\end{proof}

\section{Results on Accuracy Cont.} \label{app:accuracy}
\textbf{Detailed Training Settings.}
We load the pre-trained weights for all the transformer models from the HuggingFace official model zoo. 
Then \pjn is fine-tuned for additional iterations on each dataset using the Adam optimizer~\citep{kinga2015method}. 
Specifically, for vision tasks, \pjn is fine-tuned on CIFAR-100 and ImageNet-1K for 32 and 4 epochs, respectively. 
For NLP tasks, \pjn is fine-tuned on 1 million samples from English Wikipedia and the complete Wikitext-103 dataset for 1 epoch.

\textbf{Experiments with Different Random Seeds.}
To evaluate the robustness of \pjn to randomness, we repeat each experiment ten times using different random seeds (0–9) on ImageNet-1K. 
As shown in Table~\ref{tab:diff-seed}, \pjn consistently achieves stable performance across all group configurations, with standard deviations below 0.0012. These results demonstrate that ASTRA produces reproducible outcomes and is not sensitive to randomness in training.

\begin{table}[htb]
\caption{Accuracy of ASTRA on ImageNet-1K under ten runs with different random seeds (0–9). Mean and standard deviation (Std.) are reported for each group configuration. The original ViT-Base achieves 80.32\% accuracy for reference.
} \label{tab:diff-seed}
\centering
\small
\begin{tabular}{ccc}
\toprule
Seeds 0-9 & Mean   & Std.    \\ \midrule
\#Groups = 1   & 0.7681 & 0.0012 \\
\#Groups = 16  & 0.7855 & 0.0008 \\
\#Groups = 32  & 0.8002 & 0.0008 \\ \bottomrule
\end{tabular}
\end{table}

\textbf{Accuracy in Heterogeneous Settings.}
In the main paper, we assume the computational workload is evenly distributed across homogeneous devices. To better evaluate the scalability of \pjn in practical scenarios, we further explore its performance when deployed on heterogeneous devices. This section focuses on how heterogeneous deployment affects accuracy. Latency measurements are conducted on homogeneous devices to ensure consistency and are reported in the main paper and Appendix E.

In heterogeneous settings, stronger devices are assigned more tokens to balance the overall computation workload, while weaker devices receive fewer. Let $N$ be the total number of tokens and $K$ the number of devices. Denote $n_k$ as the number of tokens on device $k$, such that $\sum_{k=1}^{K} n_k = N$. We define the \emph{Full Precision Attention Rate (FPAR)} as:
\begin{equation}\label{eq:FPAR}
    \text{FPAR} = \sum_{k=1}^{K} \frac{n_k^2}{N^2},
\end{equation}
which measures the proportion of full-precision attention computation in the Mixed-Precision Attention mechanism. A higher FPAR indicates that more attention computation uses full-precision keys and values, thus better approximating standard attention.

To understand how FPAR captures token distribution imbalance, we examine its connection to the variance of ${n_k}$, which directly reflects distribution heterogeneity. Let $\mu = N/K$ be the average token count per device. Then:
\begin{equation}\label{eq:FPAR-var}
    \begin{aligned}
        \text{Var}(n_k) &= \frac{1}{K} \sum_{k=1}^{K} (n_k - \mu)^2 \\
                      &= \frac{1}{K} \sum_{k=1}^{K} n_k^2 - \mu^2 \\
                      &= \frac{N^2}{K} \cdot (\text{FPAR} - \frac{1}{K}).
    \end{aligned}
\end{equation}
This shows that FPAR is a monotonic function of the variance of token allocation. In other words, as the load distribution becomes more imbalanced (i.e., more heterogeneous), FPAR increases.

To study how FPAR relates to model performance, we train \pjn on ImageNet using $\text{\#Groups}=32$ across 4 devices. During training, tokens are randomly distributed across devices in each batch to simulate workload balancing on heterogeneous hardware. During evaluation, we continue to randomly assign tokens to devices and record both the prediction accuracy and the corresponding FPAR for each batch.

Figure~\ref{fig:fpar-hist} shows the distribution of FPAR across all evaluation batches. We divide the evaluation data into five bins based on FPAR, each containing 20\% of the samples. Table~\ref{tab:hetero} reports the mean accuracy for each bin. While the overall accuracy under heterogeneous deployment is slightly lower than in the homogeneous case—likely due to the added randomness in token assignment and the increased difficulty in learning a consistent pattern—we observe a clear positive correlation between FPAR and accuracy. This suggests that higher full-precision attention contributes to better model performance, demonstrating that our method remains effective under heterogeneous device settings.

\begin{figure}[htb]
\centering
\includegraphics[width=0.5\linewidth]{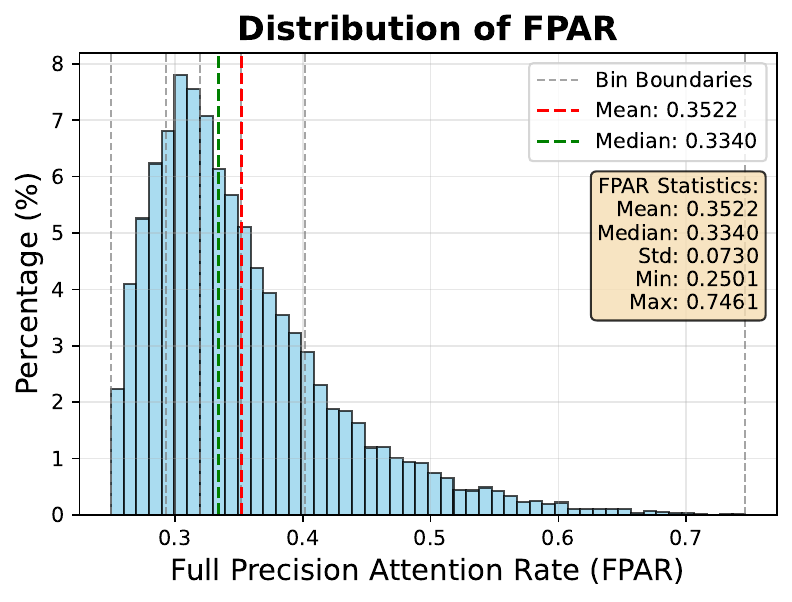}
\caption{FPAR histogram across evaluation batches.}
\label{fig:fpar-hist}
\end{figure}

\begin{table}[htb]
\caption{Accuracy of \pjn under heterogeneous token distributions.}
\label{tab:hetero}
\centering
\small
\begin{tabular}{cc}
\toprule
FPAR Range           & Mean Accuracy (\%) \\ \midrule
{[}0.2501, 0.2932{]} & 78.89              \\
{[}0.2932, 0.3196{]} & 78.96              \\
{[}0.3196, 0.3516{]} & 79.39              \\
{[}0.3516, 0.4020{]} & 79.62              \\
{[}0.4020, 0.7461{]} & 79.87              \\ \bottomrule
\end{tabular}
\end{table}

\textbf{Task Performance of Llama-3-8B on Downstream Tasks.}
We conducted additional experiments on four downstream sequence classification datasets, including CoLA~\cite{cola}, SST2~\cite{sst2}, AG News~\cite{agnews}, and QQP~\cite{qqp}.
Table~\ref{tab:downstream} reports the accuracy of the original Llama-3-8B and its \pjn versions. The results demonstrate that our small increases in pre-training perplexity in Table~\ref{tab:llama3-accuracy} lead to minor differences on downstream tasks, validating \pjn’s capability in maintaining task performance.

\begin{table}[htb]
\caption{Task performance of \pjn on downstream tasks with Llama-3-8B.}
\label{tab:downstream}
\centering
\small
\begin{tabular}{c|cccc}
\toprule
Dataset     & CoLA            & SST2            & AG News         & QQP             \\ \midrule
Llama-3-8B   & 0.7615          & 0.8426          & 0.8374          & 0.7970          \\
ASTRA, G=1  & 0.7428          & 0.7545          & 0.7852          & 0.7703          \\
ASTRA, G=16 & 0.7451          & 0.8179          & 0.8292          & 0.7674          \\
ASTRA, G=32 & \textbf{0.7539} & \textbf{0.8314} & \textbf{0.8325} & \textbf{0.7803} \\
\bottomrule
\end{tabular}
\end{table}

\section{Results on Inference Latency Cont.} \label{app:latency}
In the main paper, we evaluate the inference latency of \pjn under three key factors that impact multi-device performance:
(1) \textbf{Inter-device bandwidth}, which affects the cost of communication across devices;
(2) \textbf{Number of devices}, which determines the degree of parallelism; and
(3) \textbf{Input token length}, which scales both computation and communication demands.

While the main text focuses on a representative configuration for each factor, this section presents additional results across a broader range of settings to further validate our findings.
We include more granular evaluations varying each of the three dimensions to provide a comprehensive view of \pjn's latency behavior under diverse deployment scenarios.

\textbf{Varying Bandwidth.}
To assess the scalability of \pjn under different communication constraints, we evaluate its end-to-end latency speedup across a wide range of inter-device bandwidths from 10 to 500 Mbps.
Figure~\ref{fig:bandwidth-speedup-device} presents the results across varying numbers of devices i.e., 2, 4, 6, 8) with a fixed input token length of 1024, while Figure~\ref{fig:bandwidth-speedup-token} shows the results under varying input lengths (i.e., 256, 512, 1024, 2048, 4096) with 4 devices.

Across all configurations, \pjn consistently outperforms existing multi-device baselines, with its advantage becoming increasingly significant as bandwidth decreases.
For instance, in the 4-device and 1024-token setting (Figure~\ref{fig:bandwidth-speedup-device}(b)), \pjn achieves a speedup of $2.64\times$ at 10 Mbps, while the strongest baseline (BP-SP, $N_b=1$) only reaches $0.17\times$.
This trend also holds for different token lengths. As shown in Figure~\ref{fig:bandwidth-speedup-token}, the relative speedup of \pjn grows with increasing sequence length.
These results confirm that \pjn maintains high efficiency even under constrained bandwidth, thanks to its aggressive communication reduction strategy.
In addition, grouped quantization (e.g., $G=16,32$) offers a tunable balance between compression and accuracy, enabling consistent latency benefits across a wide range of system settings.

 \begin{figure*}[b]
  \centering
  \begin{subfigure}{0.48\textwidth}
    \includegraphics[width=\linewidth]{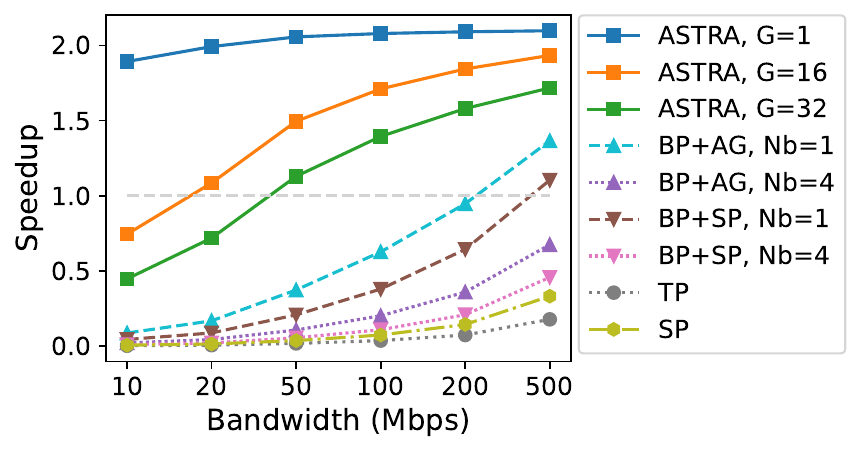}
    \caption{2 Devices}
  \end{subfigure}
  \hfill
  \begin{subfigure}{0.48\textwidth}
    \includegraphics[width=\linewidth]{figures/nedge_4_ntokens_1024_bandwidth_XX.pdf}
    \caption{4 Devices}
  \end{subfigure}
  \hfill
  \begin{subfigure}{0.48\textwidth}
    \includegraphics[width=\linewidth]{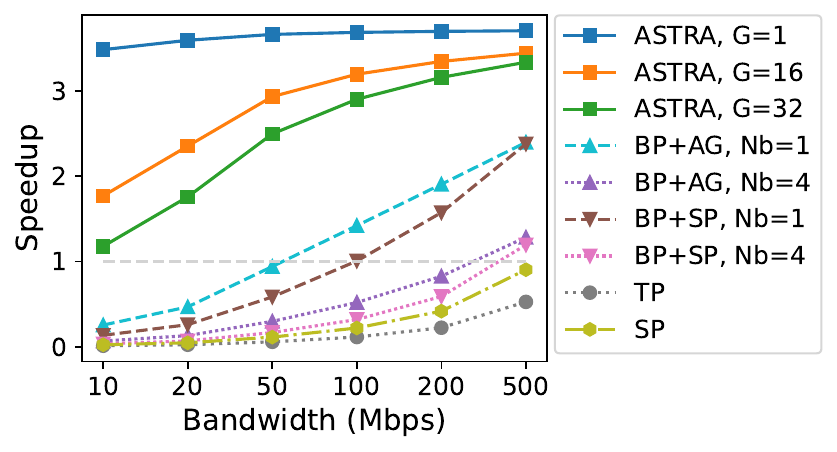}
    \caption{6 Devices}
  \end{subfigure}
  \hfill
  \begin{subfigure}{0.48\textwidth}
    \includegraphics[width=\linewidth]{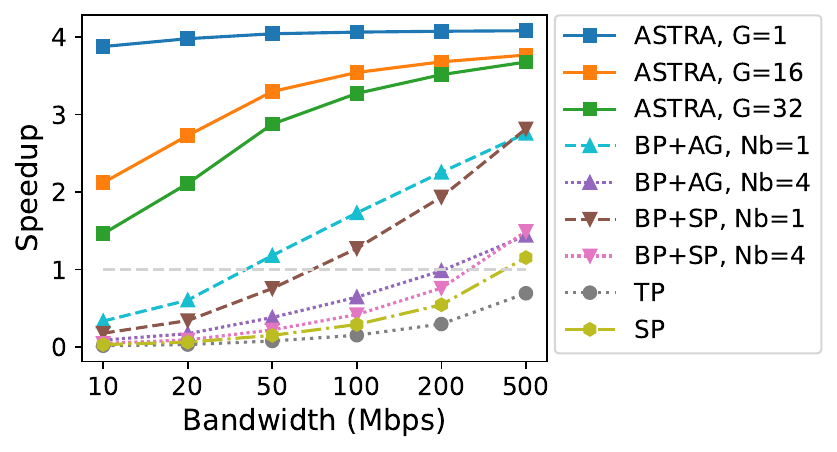}
    \caption{8 Devices}
  \end{subfigure}
  \caption{Speedup comparison under different bandwidth across different numbers of devices (w/ 1024 tokens).} \label{fig:bandwidth-speedup-device}
\end{figure*}

 \begin{figure*}[htb]
  \centering
  \begin{subfigure}{0.48\textwidth}
    \includegraphics[width=\linewidth]{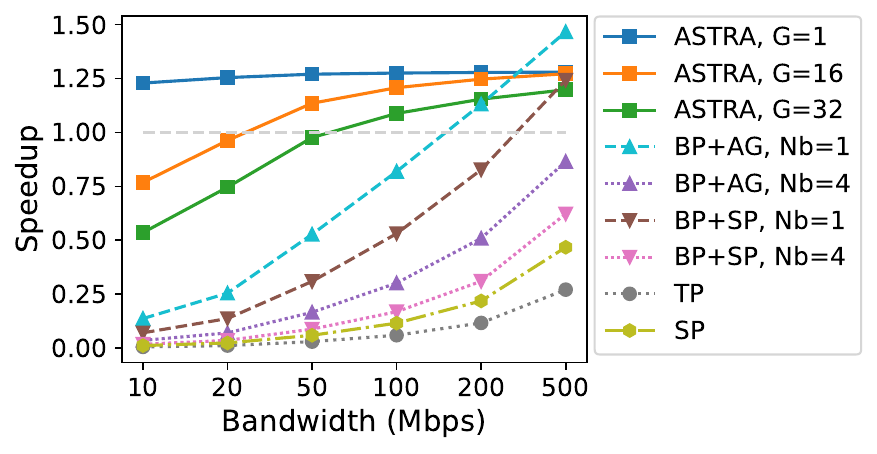}
    \caption{256 Input Tokens}
  \end{subfigure}
  \hfill
  \begin{subfigure}{0.48\textwidth}
    \includegraphics[width=\linewidth]{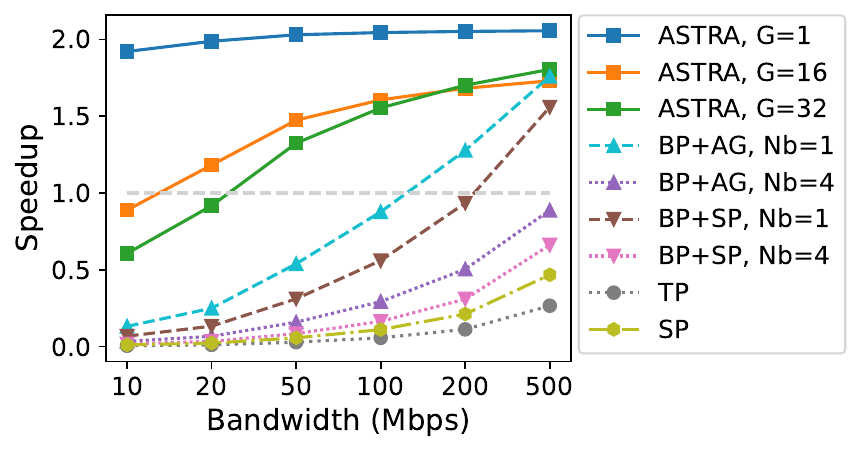}
    \caption{512 Input Tokens}
  \end{subfigure}
  \hfill
  \begin{subfigure}{0.48\textwidth}
    \includegraphics[width=\linewidth]{figures/nedge_4_ntokens_1024_bandwidth_XX.pdf}
    \caption{1024 Input Tokens}
  \end{subfigure}
  \hfill
  \begin{subfigure}{0.48\textwidth}
    \includegraphics[width=\linewidth]{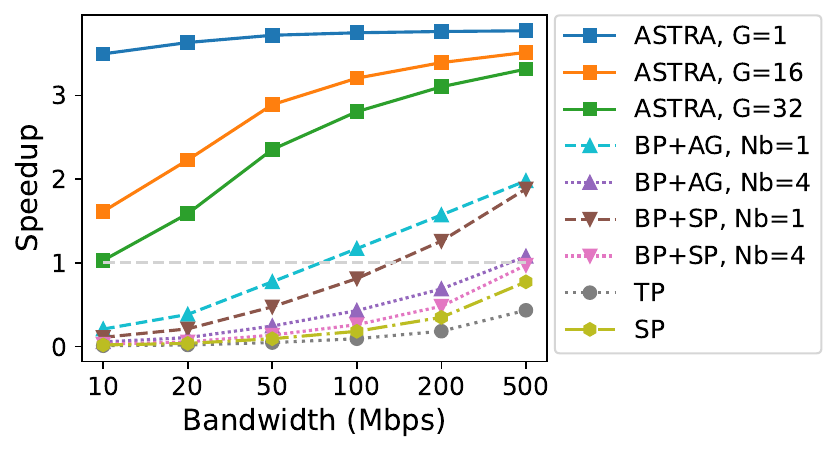}
    \caption{2048 Input Tokens}
  \end{subfigure}
  \hfill
  \begin{subfigure}{0.48\textwidth}
    \includegraphics[width=\linewidth]{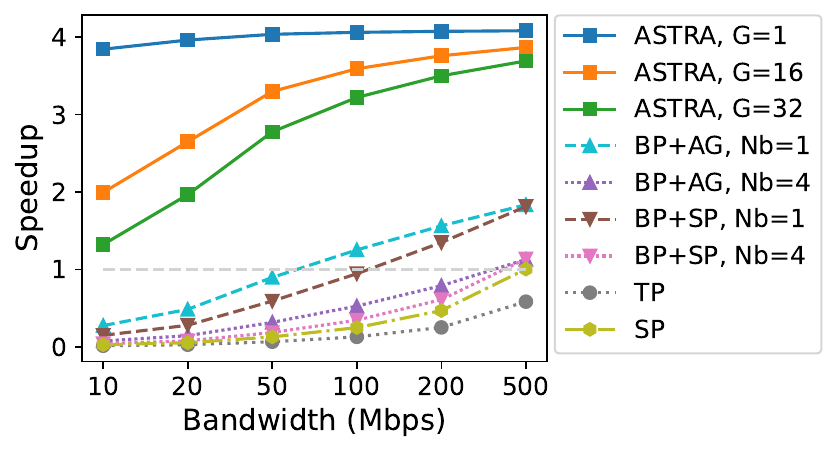}
    \caption{4096 Input Tokens}
  \end{subfigure}
  \caption{Speedup comparison under different bandwidth across different input token length (w/ 4 devices).} \label{fig:bandwidth-speedup-token}
\end{figure*}

\textbf{Varying Device Counts.}
We further examine how the number of participating devices affects end-to-end latency speedup under varying bandwidth conditions.
Figure~\ref{fig:device-speedup-bandwidth} presents results for device counts ranging from 2 to 8 across bandwidths from 10 to 500 Mbps, with the input length fixed at 1024 tokens.

Across all bandwidth settings, \pjn exhibits steadily increasing speedup as more devices are involved, demonstrating its ability to effectively utilize parallel computation.
Thanks to its communication-efficient design, where non-local token embeddings are transmitted in compact vector-quantized form, \pjn consistently delivers strong gains even under constrained bandwidth, highlighting its scalability across a range of deployment scenarios.

 \begin{figure*}[htb]
  \centering
  \begin{subfigure}{0.48\textwidth}
    \includegraphics[width=\linewidth]{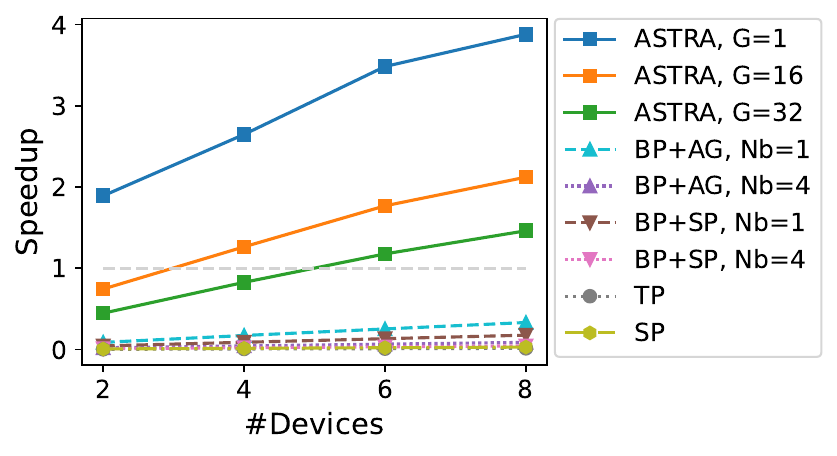}
    \caption{10 Mbps}
  \end{subfigure}
  \hfill
  \begin{subfigure}{0.48\textwidth}
    \includegraphics[width=\linewidth]{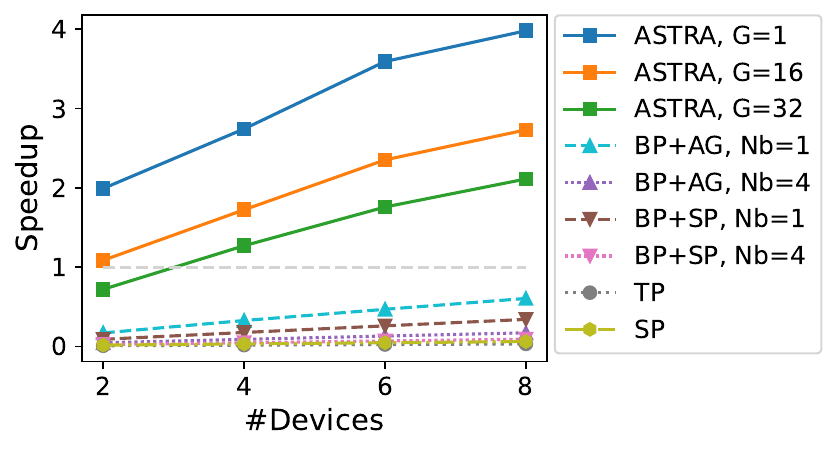}
    \caption{20 Mbps}
  \end{subfigure}
  \hfill
  \begin{subfigure}{0.48\textwidth}
    \includegraphics[width=\linewidth]{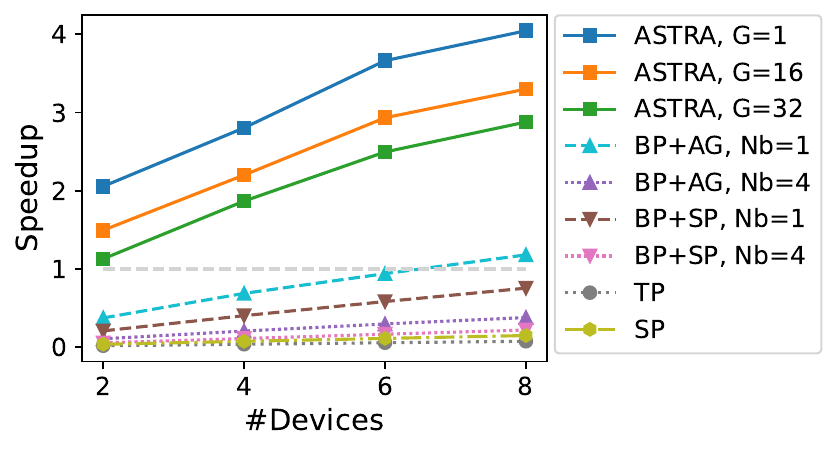}
    \caption{50 Mbps}
  \end{subfigure}
  \hfill
  \begin{subfigure}{0.48\textwidth}
    \includegraphics[width=\linewidth]{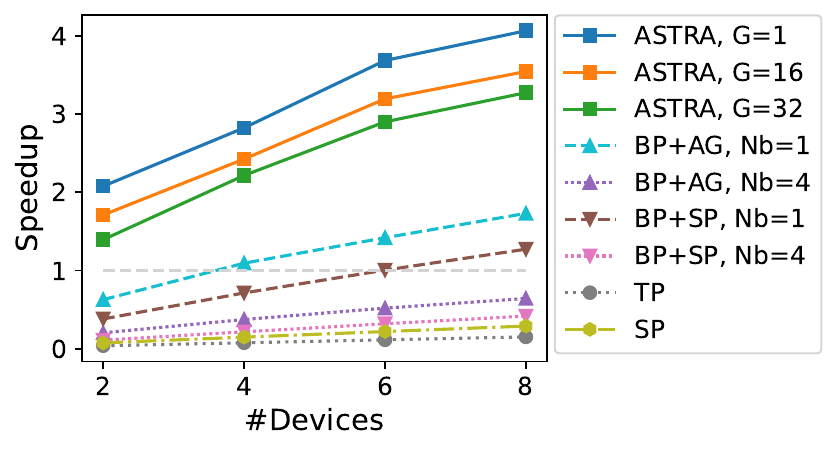}
    \caption{100 Mbps}
  \end{subfigure}
  \hfill
  \begin{subfigure}{0.48\textwidth}
    \includegraphics[width=\linewidth]{figures/nedge_XX_ntokens_1024_bandwidth_200.pdf}
    \caption{200 Mbps}
  \end{subfigure}
  \hfill
  \begin{subfigure}{0.48\textwidth}
    \includegraphics[width=\linewidth]{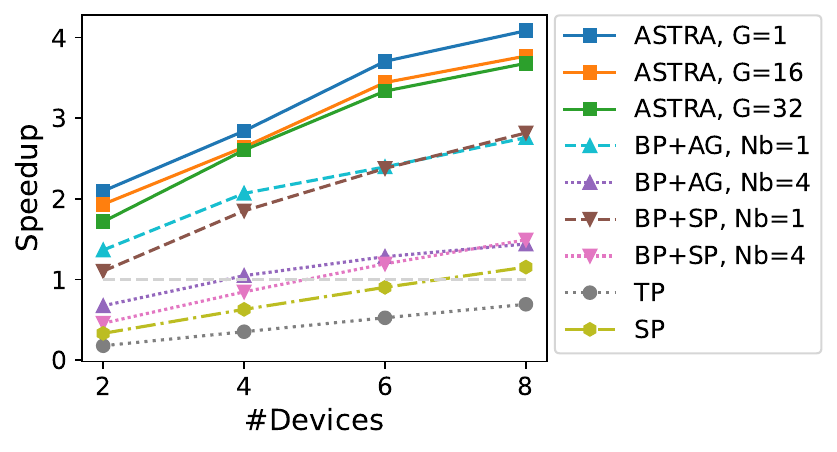}
    \caption{500 Mbps}
  \end{subfigure}
  \caption{Speedup comparison under different devices across different bandwidth (w/ 1024 tokens).} \label{fig:device-speedup-bandwidth}
\end{figure*}

\textbf{Varying Input Token Length.}
We evaluate how input token length affects latency speedup under different bandwidth conditions.
Figure~\ref{fig:tokenlength-speedup-bandwidth} shows results for input lengths from 256 to 4096 tokens across bandwidths from 10 to 500 Mbps using 4 devices.
Across all bandwidth settings, \pjn consistently achieves higher speedup than baseline methods and our improvements enhance as the input length increases.
This trend highlights the communication bottleneck in existing methods, which becomes more significant with longer sequences, while \pjn effectively mitigates this overhead through aggressive compression.

 \begin{figure*}[htb]
  \centering
  \begin{subfigure}{0.48\textwidth}
    \includegraphics[width=\linewidth]{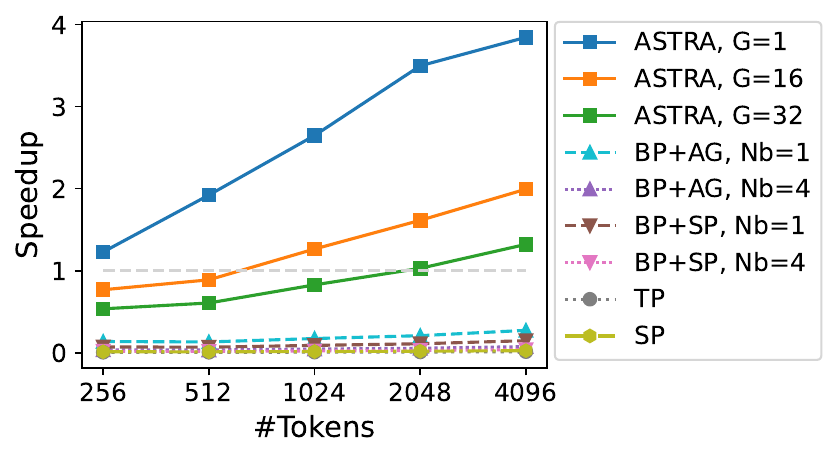}
    \caption{10 Mbps}
  \end{subfigure}
  \hfill
  \begin{subfigure}{0.48\textwidth}
    \includegraphics[width=\linewidth]{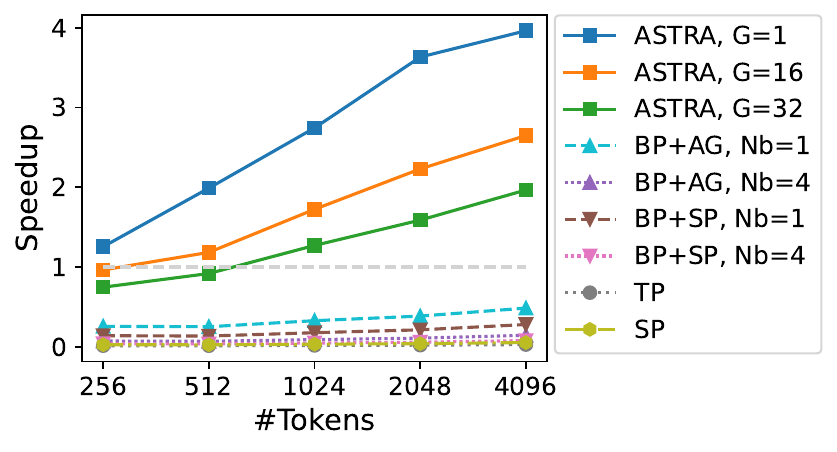}
    \caption{20 Mbps}
  \end{subfigure}
  \hfill
  \begin{subfigure}{0.48\textwidth}
    \includegraphics[width=\linewidth]{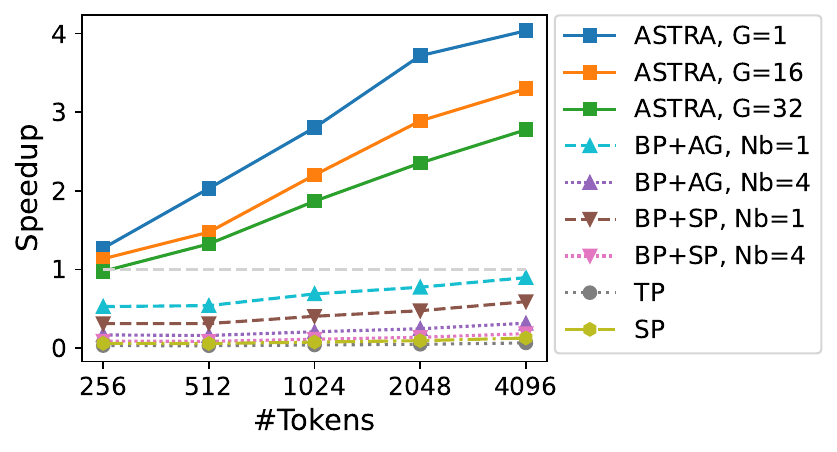}
    \caption{50 Mbps}
  \end{subfigure}
  \hfill
  \begin{subfigure}{0.48\textwidth}
    \includegraphics[width=\linewidth]{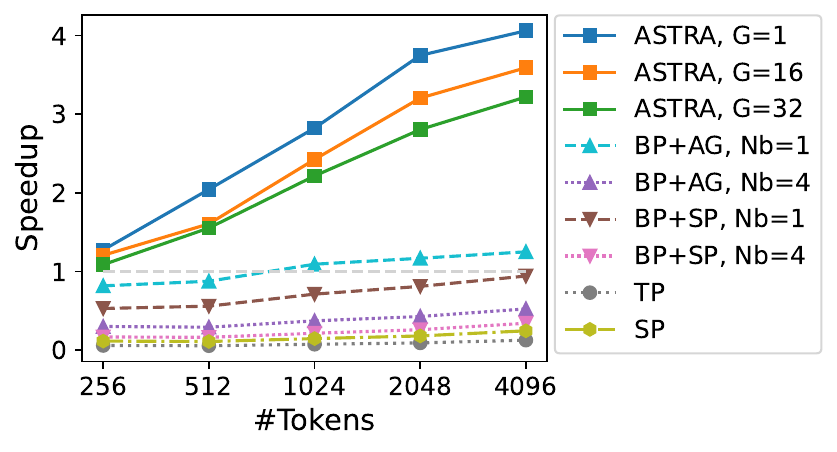}
    \caption{100 Mbps}
  \end{subfigure}
  \hfill
  \begin{subfigure}{0.48\textwidth}
    \includegraphics[width=\linewidth]{figures/nedge_4_ntokens_XX_bandwidth_200.pdf}
    \caption{200 Mbps}
  \end{subfigure}
  \hfill
  \begin{subfigure}{0.48\textwidth}
    \includegraphics[width=\linewidth]{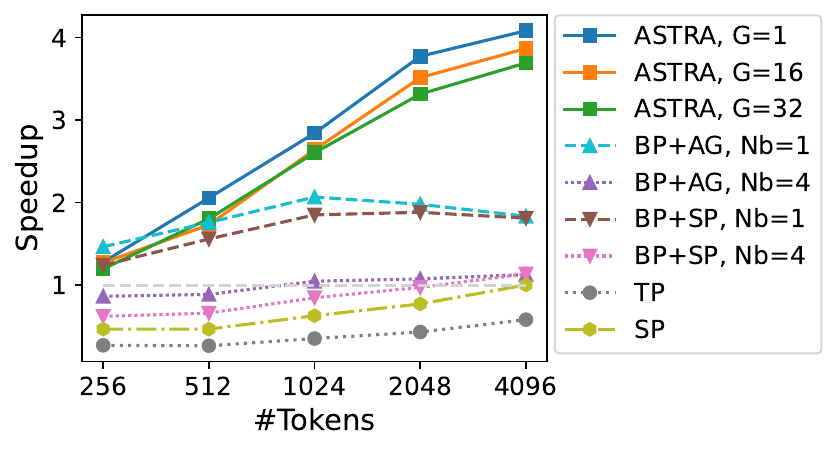}
    \caption{500 Mbps}
  \end{subfigure}
  \caption{Speedup comparison under different input token length across different bandwidth (w/ 4 devices).} \label{fig:tokenlength-speedup-bandwidth}
\end{figure*}

\textbf{Non-Ideal Network Conditions.}
We stress the inter-device network with packet loss and time-varying bandwidth. 
For the packet loss, since WiFi networks generally experience packet loss rates of around 1\% to 5\% \citep{sheshadri2017packet}, Table \ref{tab:llama3-accuracy} includes perplexity when we inject a 5\% random packet loss rate without retransmission. 
\pjn preserves task performance under this packet loss, showing only minor degradation in PPL.

\begin{table}[htb]
    \captionof{table}{Task performance (i.e., perplexity) and communication overhead on \textbf{Wikipedia} with \textbf{Llama-3-8B}, with and without 5\% packet loss.} \label{tab:llama3-accuracy}
    \centering
    \small 
    \tabcolsep=0.08cm
    \begin{tabular}{cc|cc|cc}
    \toprule
    \multirow{2}{*}{Model}               & \multirow{2}{*}{\#Groups} & \multirow{2}{*}{\begin{tabular}[c]{@{}c@{}} Bits per\\Token \end{tabular}} & \multirow{2}{*}{\begin{tabular}[c]{@{}c@{}}Compression\\ Ratio\end{tabular}} & \multirow{2}{*}{PPL} &  \multirow{2}{*}{\begin{tabular}[c]{@{}c@{}} PPL under \\5\% packet loss\end{tabular}}\\ 
                                          &        &          &          &      &      \\ \midrule
    Llama-3-8B                            & -      &   1,048,576   & -        &  5.8118    &  -   \\ \midrule
    \multirow{3}{*}{\textbf{\pjn}}        & 1      &    640        & 1,638.4  &  7.7336    &   7.7294   \\
                                          & 16     &   10,240      & 102.4    &  7.5879    &   7.5900   \\
                                          & 32     &   20,480      & 51.2     &  7.4360    &   7.4431   \\ 
    \bottomrule
    \end{tabular}
\end{table}

We further simulate fluctuating network conditions using synthetic bandwidth traces generated by a Markovian model from Pensieve~\citep{mao2017neural}, where each state corresponds to a bandwith between 20–100 Mbps, and transitions are biased toward nearby states to capture temporal correlation.
Figure \ref{fig:varying-bandwidth} in the main paper depicts the 600-second bandwidth trace together with the overall resolved requests when using a single batch size for the single-device baseline (i.e., the red dashed line) and multi-device methods on 4 devices (i.e., the bar charts).  
For each method, the bars illustrate the number of resolved requests every 10 seconds, and the overall throughput is reported in the title.
Under this fluctuating bandwidth, \pjn delivers higher throughput than both single-device inference and multi-device baselines, demonstrating that the proposed communication-efficient mechanisms remain effective under realistic, non-ideal network conditions.


\section{Ablation Study} \label{app:more_ablation}
\textbf{Varying Noise Magnitude $\mathbf{\lambda}$.}
The noise magnitude $\lambda$ in Noise-Augmented Vector Quantization controls the scale of noise added to the quantized embeddings during training.
Table~\ref{tab:lambda} shows the effect of varying $\lambda \in \{0.0, 0.1, 0.3, 1.0\}$ on both training and validation accuracy. All other hyperparameters are fixed (e.g., 16 groups, commitment loss weight $\beta=0.0005$).
As $\lambda$ increases, the gap between training and validation accuracy consistently decreases, indicating that injecting noise improves generalizability by preventing the model from overfitting to discrete embedding patterns.
Notably, when $\lambda = 1.0$, the validation accuracy improves by 0.86\% compared to $\lambda = 0$, demonstrating the effectiveness of our proposed strategy over naive vector quantization.

\begin{table}[htb]
\caption{The impact of noise magnitude $\lambda$ on classification accuracy. Gap = Train - Val.} \label{tab:lambda}
\centering
\small
\tabcolsep=0.2cm
\begin{tabular}{c|ccc}
\toprule
$\lambda$ & Train & Val   & Gap   \\ \midrule
0.0   & 99.98 & 89.91 & 10.07 \\
0.1   & 99.97 & 90.02 & 9.95  \\
0.3   & 99.98 & 90.13 & 9.85  \\
1.0   & 99.98 & 90.77 & 9.21  \\
\bottomrule
\end{tabular}
\end{table}

\textbf{Distributed Class Token \textit{VS} Single Class Token.}
Table~\ref{tab:dis-cls} reports the classification accuracy of \pjn using either a single class token or distributed class tokens across devices.
The distributed strategy consistently outperforms the single-token baseline, with accuracy gains ranging from 0.37\% to 7.13\% across different group configurations and commitment loss weights.
This demonstrates that allowing class tokens to attend to all full-precision context tokens in a distributed manner significantly enhances their ability to aggregate global information.

\begin{table}[htb]
\caption{Distributed Class Token \textit{VS} Single Class Token under different group configurations and commitment loss weights.} \label{tab:dis-cls}
\centering
\small
\tabcolsep=0.10cm
\begin{tabular}{c|ccc|ccc|ccc}
\toprule
\multirow{2}{*}{$\beta$}  & \multicolumn{3}{c|}{\#Groups=1}    & \multicolumn{3}{c|}{\#Groups=16}  & \multicolumn{3}{c}{\#Groups=32}    \\ 
             & Single & Dist. & $\Delta$ Acc. & Single & Dist.    & $\Delta$ Acc. & Single & Dist.    & $\Delta$ Acc. \\ \midrule
0.0001       & 82.39  & 88.95          & 6.56     & 89.11  & 90.37          & 1.26    & 90.39  & 91.60          & 1.21    \\
0.0002       & 81.48  & 88.60          & 7.12     & 89.02  & 90.38          & 1.36    & 90.84  & 91.21          & 0.37    \\
0.0005       & 81.82  & \textbf{88.95} & 7.13    & 88.93  & \textbf{90.77} & 1.84    & 90.79  & \textbf{91.64} & 0.85    \\ 
\bottomrule
\end{tabular}
\end{table}

\textbf{Varying the Commitment Loss Weights $\mathbf{\beta}$.}
Recall from \S\ref{sec:communication} that we include a commitment loss term to encourage the original token embeddings to remain close to their assigned codebook entries following VQVAE~\citep{vqvae}.
While the original VQVAE typically sets the commitment loss weight to 0.25, we adopt much smaller values in our setting, as we apply vector quantization separately at each Transformer block.
Table~\ref{tab:dis-cls} reports the results of \pjn under different commitment weights $\beta \in \{0.0001, 0.0002, 0.0005\}$.
Here we further compare with two control variants: one without commitment loss (i.e., $\beta = 0$), and one using an excessively large weight (i.e., $\beta = 0.25$). 
As shown in Table~\ref{tab:beta}, either omitting or misconfiguring the commitment term slightly degrades accuracy, with performance drops ranging from 0.1\% to 1.67\%, confirming the importance of tuning $\beta$ appropriately. 

\begin{table}[htb]
\caption{The impact of commitment loss weight $\beta$.} \label{tab:beta}
\centering
\small
\tabcolsep=0.2cm
\begin{tabular}{c|ccc}
\toprule
\multirow{2}{*}{$\beta$} & \multicolumn{3}{c}{\#Groups} \\ \cmidrule{2-4} 
                                      & 1     & 16        & 32       \\ \midrule
0    & 88.85          & 90.46          & 91.42          \\
0.25 & 88.75          & 89.7           & 89.97          \\
best & \textbf{88.95} & \textbf{90.77} & \textbf{91.64}    \\ \bottomrule
\end{tabular}
\end{table}

\textbf{Sensitivity to Codebook Size}
We further study the sensitivity of ASTRA to the VQ codebook size $K$.
Table~\ref{tab:codebook_size} reports results on ViT-Base with $G=32$ and 1024 input tokens.
Overall, the task accuracy is not very sensitive to $K$ in the tested range from 256 to 2048.
The accuracy varies within 0.35\% on CIFAR-100 and within 0.35\% on ImageNet-1K.
Meanwhile, smaller codebooks lead to lower latency.
This is because reducing $K$ decreases the number of transmitted bits per VQ index, i.e., $\log_2 K$, and also reduces the codebook-related computation cost.
For example, reducing $K$ from 2048 to 256 improves the compression ratio from $69.8\times$ to $96.0\times$, while reducing computation latency from 45.59 ms to 38.81 ms and communication latency from 3.60 ms to 2.62 ms at 100 Mbps.
These results indicate that $K$ provides a mild latency-accuracy trade-off, while $K=1024$ serves as a balanced default setting in our main experiments.

\begin{table}[thb]
\centering
\caption{Sensitivity to codebook size $K$ on ViT-Base with $G=32$ and 1024 input tokens.
Communication latency is measured under 100 Mbps bandwidth.}
\label{tab:codebook_size}
\small
\setlength{\tabcolsep}{2.2pt}
\begin{tabular}{c|c|cc|cc}
\toprule
$K$ &
\begin{tabular}[c]{@{}c@{}}Compression\\ratio\end{tabular} &
\begin{tabular}[c]{@{}c@{}}CIFAR-100\\acc.\end{tabular} &
\begin{tabular}[c]{@{}c@{}}ImageNet\\acc.\end{tabular} &
\begin{tabular}[c]{@{}c@{}}Comp.\\lat. (ms)\end{tabular} &
\begin{tabular}[c]{@{}c@{}}Comm.\\lat. (ms)\end{tabular} \\
\midrule
256  & $96.0\times$ & 91.35 & 80.22 & 38.81 & 2.62 \\
512  & $85.3\times$ & 91.32 & 80.30 & 38.88 & 2.78 \\
1024 & $76.8\times$ & 91.64 & 80.28 & 40.97 & 3.27 \\
2048 & $69.8\times$ & 91.29 & 80.57 & 45.59 & 3.60 \\
\bottomrule
\end{tabular}
\end{table}

\section{Memory Cost Analysis} \label{app:memory}
ASTRA introduces a small additional memory cost to store the VQ codebooks, while the vector-quantized keys and values can reduce the memory required by the KV cache. We discuss these two aspects separately below.
VQ codebook introduces a small additional memory cost. The memory footprint of the VQ codebooks is 
$$M_{\text{codebook}} = L \cdot C \cdot K \cdot d \cdot b$$
where $L$ is the number of layers, $C$ is the number of codebooks per layer, $K$ is the codebook size (number of entries), $d$ is the hidden dimension, and $b$ is the number of bytes per value.
Note that this expression is independent of the number of VQ groups. Grouped VQ partitions the hidden dimension into groups (i.e., $G$ groups of dimension $d / G$). Since $G \cdot (d / G) = d$, the total codebook size only scales with the full hidden dimension $d$, not with $G$.
In practice, this overhead is small compared to the original model parameters. For example, in Llama-3-8B, we use $L = 32$, $C = 2$, $K = 1024$, $d = 1024$, $b = 2$ bytes (i.e., float16 precision). This gives
\begin{equation}
 M_{\text{codebook}}
 = 32 \times 2 \times 1024 \times 1024 \times 2 \text{ bytes}
 = 134{,}217{,}728 \text{ bytes} = 128 \text{ MiB}.
\end{equation}
Thus, for Llama-3-8B, the total VQ codebook storage is about 128 MiB, regardless of the number of VQ groups. This corresponds to roughly 0.78\% of the model size when the base model is in float16 ($\approx16$ GB), and about 1.56\% when the base model is 8-bit quantized ($\approx8$ GB).
KV cache memory cost is reduced by VQed keys and values.
ASTRA reduces KV cache memory by storing non-local keys and values as VQ indices instead of full-precision tensors. For an input sequence of length $N$, the KV cache memory of the original model is
\begin{equation}
 M_{\text{KV}}^{\text{orig}} = 2 \cdot N \cdot L \cdot d \cdot b,
\end{equation}
where the factor 2 accounts for keys and values, $L$ is the number of layers, $d$ is the hidden dimension, and $b$ is the number of bytes per value.
With ASTRA, we assume $n_d$ devices, $G$ VQ groups, and an even partition of tokens across devices. Each device keeps its local tokens in full precision, while non-local tokens are cached as VQ indices (one index per group per token). The KV cache memory becomes
\begin{equation}
M_{\text{KV}}^{\text{ASTRA}}  
= 2 \Big(  
\underbrace{\frac{N}{n_d} \cdot L \cdot d \cdot b}_{\text{local full-precision KV}}
+
\underbrace{(n_d - 1) \cdot \frac{N}{n_d} \cdot L \cdot G \cdot \frac{\log_2 K}{8}}_{\text{non-local KV stored as indices}}
\Big),  
\end{equation}
where $K$ is the codebook size and $\log_2 K$ is the number of bits per VQ index.
For Llama-3-8B, we use $N = 1024, L = 32, d = 1024, b = 2$ bytes, $n_d = 4$, $G = 32$, and $K = 1024$ (i.e., ($\log_2 K = 10$), so we have
\begin{equation}
 M_{\text{KV}}^{\text{orig}}
 = 2 \cdot 1024 \cdot 32 \cdot 1024 \cdot 2
 = 134{,}217{,}728 \text{ bytes}
 \approx 128 \text{ MiB},
\end{equation}
\begin{equation}
 M_{\text{KV}}^{\text{ASTRA}}
 = 2 \Big(
 \frac{1024}{4} \cdot 32 \cdot 1024 \cdot 2 
+
(4 - 1) \cdot \frac{1024}{4} \cdot 32 \cdot 32 \cdot \frac{10}{8}
 \Big)
 = 35{,}520{,}512 \text{ bytes}
 \approx 33.9 \text{ MiB}.
\end{equation}
Thus, in this configuration, ASTRA uses only about 26.5\% of the original KV cache memory.

\section{Impact Statement and Limitation} \label{app:impact_limitation}
\textbf{Potential Societal Impact.}
Our work aims to make large Transformer models more deployable in real-world environments by enabling efficient multi-device inference under limited bandwidth. This can expand the accessibility of powerful AI models to edge and consumer-grade devices, potentially benefiting applications in healthcare, accessibility, and low-connectivity regions.
However, multi-device deployment may introduce new robustness and maintenance challenges. Unlike single-device inference, distributed inference requires reliable synchronization and communication among devices. Failures in individual devices or unstable connections can lead to degraded performance or unpredictable outputs. These issues can make such systems harder to debug, monitor, and guarantee correctness, especially in safety-critical applications.

\textbf{Limitation and Future Work.}
While \pjn achieves strong performance across vision and language tasks, we observe a degradation in zero-shot generalization in the GPT experiments (see \S\ref{sec:accuracy}). We hypothesize this is due to the limited expressiveness of the discrete embedding space introduced by vector quantization. Future work may explore hybrid compression strategies that retain generalization ability while still reducing communication costs.
Additionally, our grouped vector quantization design requires maintaining a separate codebook for each group, which increases the overall storage footprint and may limit deployment flexibility across heterogeneous environments. As a future direction, we aim to investigate codebook sharing mechanisms or dynamically composable codebooks to reduce storage costs and enable bandwidth-aware adaptation without retraining.




\end{document}